\documentclass{article}
\usepackage{graphicx}%
\usepackage{multirow}%
\usepackage{amsmath,amssymb,amsfonts}%
\usepackage{amsthm}%
\usepackage{mathrsfs}%
\usepackage[title]{appendix}%
\usepackage{xcolor}%
\usepackage[table]{xcolor}
\usepackage{textcomp}%
\usepackage{manyfoot}%
\usepackage{booktabs}%
\usepackage{listings}%
\usepackage[colorlinks=true,allcolors=black]{hyperref}
\usepackage{url}
\usepackage{thmtools}
\usepackage{thm-restate}
\usepackage{mathrsfs}%
\usepackage[title]{appendix}%
\usepackage{adjustbox}
\usepackage{makecell}
\usepackage{microtype}
\usepackage{subcaption}
\usepackage{mathtools}
\usepackage[capitalize,noabbrev]{cleveref}
\usepackage[preprint,nohyperref]{icml2026}

\newtheorem{theorem}{Theorem}

\newtheorem{definition}{Definition}%

\definecolor{mygray}{gray}{0.9}
\definecolor{myred}{rgb}{1.0, 0.0, 0.0}
\definecolor{mygreen}{rgb}{0.0, 1.0, 0.0}

\usepackage[textsize=tiny]{todonotes}

\icmltitlerunning{Unrewarded Exploration in Large Language Models Reveals Latent Learning from Psychology}

\begin{document}

\twocolumn[
  \icmltitle{Unrewarded Exploration in Large Language Models \\ Reveals Latent Learning from Psychology}
  \icmlsetsymbol{intern}{*}
  \begin{icmlauthorlist}
    \icmlauthor{Jian Xiong}{fudan,baidu,intern}
    \icmlauthor{Jingbo Zhou}{baidu}
    \icmlauthor{Zihan Zhou}{auburn}
    \icmlauthor{Yixiong Xiao}{baidu}
    \icmlauthor{Le Zhang}{baidu}
    \icmlauthor{Jingyong Ye}{fudan}
    \icmlauthor{Rui Qian}{fudan}
    \icmlauthor{Yang Zhou}{auburn}
    \icmlauthor{Dejing Dou}{fudan,bedi}
  \end{icmlauthorlist}

  \icmlaffiliation{fudan}{Fudan University, China}
  \icmlaffiliation{auburn}{Auburn University, USA}
  \icmlaffiliation{baidu}{Baidu Research, China}
  \icmlaffiliation{bedi}{BEDI Cloud, China}

  \icmlcorrespondingauthor{Jingbo Zhou}{zhoujingbo@outlook.com}
  \icmlcorrespondingauthor{Yang Zhou}{yangzhou@auburn.edu}
  \icmlcorrespondingauthor{Dejing Dou}{dejingdou@gmail.com}

  \vskip 0.1in
]
\printAffiliationsAndNotice{* Work done when intern at Baidu Research}

\begin{abstract}
Latent learning, classically theorized by Tolman, shows that biological agents (e.g., rats) can acquire internal representations of their environment without rewards, enabling rapid adaptation once rewards are introduced.
In contrast, from a cognitive science perspective, reward learning remains overly dependent on external feedback, limiting flexibility and generalization.
Although recent advances in the reasoning capabilities of large language models (LLMs), such as OpenAI-o1 and DeepSeek-R1, mark a significant breakthrough, these models still rely primarily on reward-centric reinforcement learning paradigms. Whether and how the well-established phenomenon of latent learning in psychology can inform or emerge within LLMs' training remains largely unexplored.
In this work, we present novel findings from our experiments that LLMs also exhibit the latent learning dynamics. 
During an initial phase of unrewarded exploration, LLMs display modest performance improvements, as this phase allows LLMs to organize task-relevant knowledge without being constrained by reward-driven biases, and performance is further enhanced once rewards are introduced.
LLMs post-trained under this two-stage exploration regime ultimately achieve higher competence than those post-trained with reward-based reinforcement learning throughout.
Beyond these empirical observations, we also provide theoretical analyses for our experiments explaining why unrewarded exploration yields performance gains, offering a mechanistic account of these dynamics.
Specifically, we conducted extensive experiments across multiple model families and diverse task domains to establish the existence of the latent learning dynamics in LLMs.
\end{abstract}

\section{Introduction}\label{sec1}
\begin{figure*}[ht]
    \centering
    \includegraphics[width=0.95\linewidth]{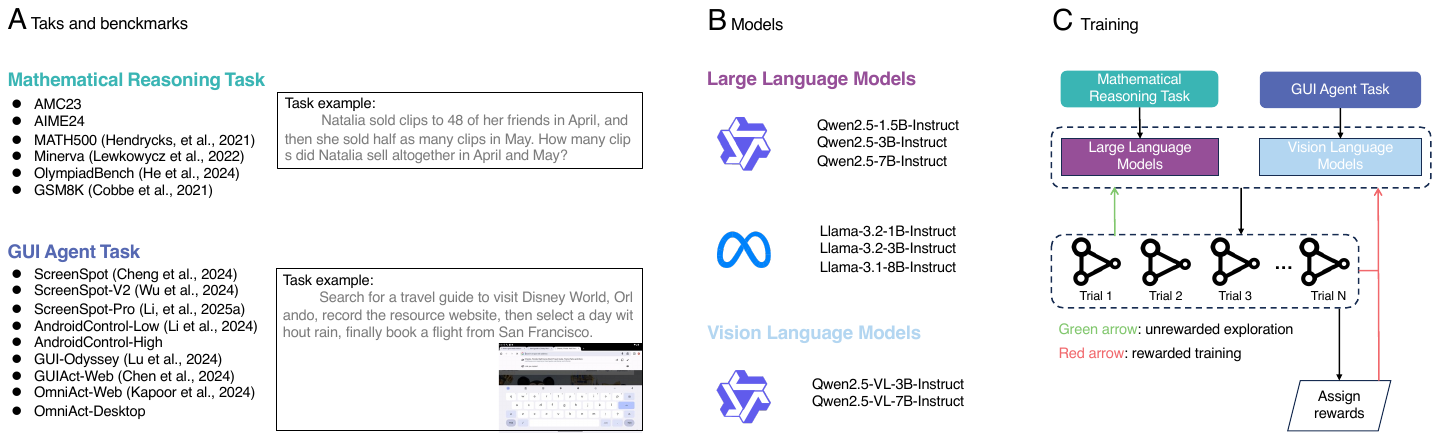}
    \caption{Overview of tasks and models we use in this work, and training paradigm. 
    \textbf{A}: Examples for the different tasks. 
    Experiment tasks include mathematical reasoning and GUI agent task. 
    \textbf{B}: Used LLMs include different series and parameter sizes.
    \textbf{C}: Our training process. For every task, questions are given to LLMs to generate a group of responses.
    The green arrow represents learning without rewards.
    The red arrow represents learning with rewards.}\label{figure1}
\end{figure*}

Learning without rewards has long challenged traditional views of behavior. 
In a series of classic studies, Blodgett and Tolman demonstrated that rats exploring mazes without food rewards were nonetheless able to acquire knowledge of the maze layout \citep{blodgett1929effect, tolman1948cognitive}. 
Once food was later introduced, their performance improved abruptly, revealing that substantial learning had taken place during the unrewarded exploration, a phenomenon that Tolman termed latent learning in psychology. These findings contradicted purely stimulus-response theories, which treat learning as reinforcement-driven association strengthening rather than the acquisition of internal representations of the environment. Consequently, the reward-centric learning often struggles to account for flexible and generalized behavior that emerges independently of direct reinforcement.

Recent advances in reinforcement learning for LLMs have achieved impressive reasoning performance, yet these models remain fundamentally reward-dependent rather than exhibiting the latent learning. Advanced reasoning models such as OpenAI-o1 and DeepSeek-R1 have demonstrated remarkable capabilities across diverse tasks \citep{bai2022training, achiam2023gpt, guo2025deepseek, yang2025qwen3, team2025kimi}. However, their learning frameworks remain deeply rooted in reward-centric paradigms, particularly reinforcement learning from outcome or process rewards and related optimization strategies\citep{schulman2017proximal, shao2024deepseekmath}. The prevailing assumption is that improvements in LLMs' capabilities must be explicitly guided by reward signals. Yet, if biological agents can acquire predictive world models without direct reinforcement, this raises a fundamental question: can LLMs also exhibit latent learning, as in Tolman’s theory that learning occurs even without rewards, improving performance even without the guidance of reward signals and revealing further gains once rewards are introduced? Addressing this question not only establishes a conceptual bridge between cognitive psychology and LLMs but also provides a basis for understanding how LLMs may internalize knowledge without rewards.

In this work, we provide novel empirical findings from our experiments that LLMs indeed display such latent learning as theorized by Tolman. 
We introduce an unrewarded exploration phase, analogous to Tolman’s latent learning, in which LLMs generate responses, omit response verification and rewards assignment, and subsequently learn directly from their own responses.
Unrewarded exploration allows LLMs to explore and organize task-relevant knowledge without being constrained by reward-driven biases, leading to more coherent internal representations even in the absence of rewards, hence improving performance.
Specifically, unrewarded exploration optimizes a KL-regularized policy matching objective that increases the likelihood of responses sampled from the current policy while constraining policy updates.
Under this setting, we observe that LLMs trained without rewards already exhibit measurable improvements in performance.
When rewards are subsequently introduced, these models demonstrate better performance gains compared to LLMs trained throughout with rewards. 
The resulting dynamics closely parallel Blodgett and Tolman’s experiments in rats~\citep{blodgett1929effect, tolman1948cognitive}, where learning happended during exploration without rewards and laid the groundwork for better performance improvements once food was provided.
Beyond these surprising empirical findings, we further conduct theoretical analyses focusing on the unrewarded exploration phase in our experiments, explaining how models learn in the absence of rewards, thereby gaining performance improvement.

To investigate whether latent learning arises in LLMs, we conduct experiments in two representative tasks for reinforcement learning for LLMs. 
First, we conduct experiments on mathematical reasoning task, a widely used task for reinforcement learning for LLMs, where performance improvements can be precisely measured through problem-solving accuracy~\citep{hendrycks2021measuring, cobbe2021training, he2024olympiadbench, lewkowycz2022solving}.
This domain provides a clean setting to examine whether performance can improve in the absence of reward signals and to quantify the extent of such improvement.
Second, we conduct experiments on Graphic User Interface (GUI) agent tasks, where LLMs are required to act like human to manipulate computers, mobile devices, etc., extending our experiments to interactive environments~\citep{cheng2024seeclick, wu2024atlas, li2025screenspot, li2024effects, lu2024gui, chen2024guicourse, kapoor2024omniact}. 
These tasks capture two distinct levels of difficulty: in low-level tasks, the LLMs is required to perform a single correct action (i.e., to ground which area to perform click), while in high-level tasks, the model must execute a sequence of actions (e.g., click, type, close and etc.), adapting iteratively to changing environments until the goal is achieved. 
This distinction allows us to test whether the latent learning happends under both minimal and extended forms of sequential decision making, bridging simple reasoning with more complex agent behaviors. Together, these two domains provide complementary perspectives. Mathematical reasoning isolates unrewarded exploration in a controlled, symbolic setting, while GUI agent tasks demonstrate its emergence in embodied, interactive contexts. This combination ensures that our findings are not confined to a single task domain, but instead generalize across both static and dynamic reasoning environments.

In summary, our contributions are as follows.
\begin{itemize} 
    \item We introduce an unrewarded exploration phase for LLMs training that yields performance gains even in the absence of rewards, with further improvements once rewards are introduced.
    \item We present theoretical analyses that provide mechanistic explanations for why LLMs can improve performance during unrewarded exploration.    
    \item We conduct extensive experiments across multiple model families, model parameter sizes, and diverse task domains to validate the existence of learning dynamics analogous to Tolman’s latent learning.
\end{itemize}

\section{Related Work}\label{sec2}
\subsection{Latent Learning in Psychology}\label{sec2.1}
The phenomenon of latent learning was first discovered in Blodgett’s experiments with rats, where animals exploring mazes without rewards later displayed abrupt performance improvements once food was introduced~\citep{blodgett1929effect}.
Tolman extended these findings and introduced the concept of cognitive maps, arguing that organisms form internal representations of the environment independent of rewards~\citep{tolman1948cognitive}.
Later studies, such as those by Spence and Lippitt, examined conditions under which latent learning manifests and highlighted both its robustness and its sensitivity to motivational states~\citep{spence1946experimental}.
Together, these studies established latent learning as a core concept in cognitive psychology, challenging conventional stimulus–response accounts of behavior.

\subsection{Reinforcement Learning Paradigm for Large Language Models}\label{sec2.2}
In artificial intelligence, reinforcement learning from human feedback (RLHF) has emerged as a central method for aligning and fine-tuning large language models.
Early work by~\citet{ziegler2019fine} demonstrated that human preference data combined with reinforcement learning substantially improved model alignment performance.
Most RLHF implementations rely on Proximal Policy Optimization (PPO)~\citep{schulman2017proximal}, which has become the universally acknowledged standard for policy optimization in many domains.
More recent studies have proposed improvements and alternatives to PPO, such as GRPO~\citep{shao2024deepseekmath}, which are attributed to the powerful capabilities of the DeepSeek-R1 model and the phenomenon of self-reflection~\citep{guo2025deepseek}, followed by a series of works~\citep{yu2025dapo, xiong2025aapo, zheng2025group}. 
These methods have been applied to domains such as mathematical reasoning, code generation and other agent tasks~\citep{luo2025gui, zeng2025simplerl,zhang2026omegausebuildinggeneralpurposegui}, but they all presuppose that reward signals are indispensable for driving performance improvement.

\subsection{Bridging Psychology and Artificial Intelligence}\label{sec2.3}
Prior work has extensively connected reinforcement learning to neural reward processing, particularly by relating temporal-difference learning to dopaminergic reward prediction errors~\citep{niv2009reinforcement, schultz1997neural} and has also developed reinforcement learning as a general computational framework beyond biological settings~\citep{doya2000reinforcement}. 
However, this line of research largely focuses on reward-driven learning and pays little attention to classical psychological phenomena that challenge the necessity of reinforcement, such as Tolman’s latent learning.
In cognitive psychology, latent learning demonstrates that biological agents (e.g. rats) can acquire task-relevant knowledge during unrewarded exploration, forming internal representations without rewards.
While related notions of internal representations have appeared in computational neuroscience and reinforcement learning, including the successor representation~\citep{dayan1993improving} and predictive map accounts~\citep{stachenfeld2017hippocampus}, these works do not examine whether unrewarded exploration alone yields measurable performance improvements in modern artificial intelligence systems (i.e., LLMs). 
To the best of our knowledge, our work provides the first explicit evidence of such latent learning dynamics in LLMs.
\begin{table*}[!h]
    \caption{Model pass@1 performance across various mathematical reasoning benchmarks, reported by the best average score.
    Rewarded: training with rewards.
    Unrewarded: exploration without rewards.
    We show the relative difference at the \textit{+ Unrewarded} row compared to the base model and the \textit{+ Unrewarded $\rightarrow$ Rewarded} row compared to the \textit{+ Rewarded} row.}
    \setlength{\tabcolsep}{3pt}
    \begin{adjustbox}{width=\textwidth, center} 
    \begin{tabular}{lccccccc}
    \toprule
    \textbf{Model} & \textbf{GSM8K} & \textbf{MATH500} & \textbf{Minerva} & \textbf{OlympiadBench} & \textbf{AIME24} & \textbf{AMC23} & \textbf{Avg}\\
    \midrule
    \multicolumn{8}{c}{\textit{Llama 1B Models}} \\
    \midrule
    Llama-3.2-1B-Instruct & 26.8 & 11.8 & 1.8 & 3.7 & 0.0 & 2.5 & \cellcolor{mygray} 7.8 \\
    + Unrewarded & 33.0 (\textcolor{myred}{+6.2}) & 15.0 (\textcolor{myred}{+3.2}) & 5.9 (\textcolor{myred}{+4.1}) & 3.6 (\textcolor{mygreen}{-0.1}) & 0.0 & 15.0 (\textcolor{myred}{+12.5}) & \cellcolor{mygray} 12.1 (\textcolor{myred}{+4.3}) \\
    + Rewarded & 56.1 & 33.2 & 7.0 & 9.2 & 10.0 & 17.5 & \cellcolor{mygray} 22.2\\
    + Unrewarded $\rightarrow$ Rewarded & 58.2 (\textcolor{myred}{+2.1}) & 37.8 (\textcolor{myred}{+4.6}) & 6.6 (\textcolor{mygreen}{-0.4}) & 9.9 (\textcolor{myred}{+0.7}) & 10.0 & 20.0 (\textcolor{myred}{+2.5}) & \cellcolor{mygray} \textbf{23.8} (\textcolor{myred}{+1.6}) \\
    \midrule 
    \multicolumn{8}{c}{\textit{Llama 3B Models}} \\
    \midrule
    Llama-3.2-3B-Instruct & 45.6 & 26.2 & 7.7 & 6.4 & 3.3 & 17.5 & \cellcolor{mygray} 17.8 \\
    + Unrewarded & 67.5 (\textcolor{myred}{+21.1}) & 36.6 (\textcolor{myred}{+10.4}) & 7.7 & 11.3 (\textcolor{myred}{+4.9}) & 6.7 (\textcolor{myred}{+3.4}) & 25.0 (\textcolor{myred}{+7.5}) & \cellcolor{mygray} 25.8 (\textcolor{myred}{+8.0}) \\
    + Rewarded & 82.0 & 53.4 & 23.2 & 18.7 & 20.0 & 37.5 & \cellcolor{mygray} 39.1 \\
    + Unrewarded $\rightarrow$ Rewarded & 83.5 (\textcolor{myred}{+1.5}) & 51.4 (\textcolor{mygreen}{-2.0}) & 16.5 (\textcolor{mygreen}{-6.7}) & 16.1 (\textcolor{mygreen}{-2.6}) & 26.7 (\textcolor{myred}{+6.7}) & 42.5 (\textcolor{myred}{+5.0}) & \cellcolor{mygray} \textbf{39.5} (\textcolor{myred}{+0.4}) \\
    \midrule 
    \multicolumn{8}{c}{\textit{Llama 8B Models}} \\
    \midrule
    Llama-3.1-8B-Instruct & 23.0 & 18.6 & 4.0 & 5.8 & 0.0 & 5.0 & \cellcolor{mygray} 9.4 \\
    + Unrewarded & 64.4 (\textcolor{myred}{+41.4}) & 26.8 (\textcolor{myred}{+10.2}) & 11.8 (\textcolor{myred}{+7.8}) & 5.8 & 0.0 & 25.0 (\textcolor{myred}{+20.0}) & \cellcolor{mygray} 22.3 (\textcolor{myred}{+12.9}) \\
    + Rewarded & 88.5 & 55.8 & 29.4 & 19.1 & 13.3 & 37.5 & \cellcolor{mygray} 40.6\\
    + Unrewarded $\rightarrow$ Rewarded & 88.5 & 53.8 (\textcolor{mygreen}{-2.0}) & 30.5 (\textcolor{myred}{+1.1}) & 22.5 (\textcolor{myred}{+3.2}) & 20.0 (\textcolor{myred}{+6.7}) & 32.5 (\textcolor{mygreen}{-5.0}) & \cellcolor{mygray} \textbf{41.3} (\textcolor{myred}{+0.7}) \\
    \midrule
    \multicolumn{8}{c}{\textit{Qwen 1.5B Models}} \\
    \midrule
    Qwen2.5-1.5B-Instruct & 68.7 & 47.6 & 16.5 & 18.1 & 0.0 & 20 & \cellcolor{mygray} 28.5 \\
    + Unrewarded & 68.2 (\textcolor{mygreen}{-0.5}) & 48.0 (\textcolor{myred}{+0.4}) & 17.3 (\textcolor{myred}{+0.8}) & 19.3 (\textcolor{myred}{+1.2}) & 0.0 & 32.5 (\textcolor{myred}{+12.5}) & \cellcolor{mygray} 30.9 (\textcolor{myred}{+2.4}) \\
    + Rewarded & 71.0 & 54.4 & 16.5 & 20.1 & 0.0 & 32.5 & \cellcolor{mygray} 32.4 \\
    + Unrewarded $\rightarrow$ Rewarded & 72.2 (\textcolor{myred}{+1.2}) & 54.2 (\textcolor{mygreen}{-0.2}) & 19.5 (\textcolor{myred}{+3.0}) & 19.4 (\textcolor{mygreen}{-0.7}) & 10.0 (\textcolor{myred}{+10.0}) & 37.5 (\textcolor{myred}{+5.0}) & \cellcolor{mygray} \textbf{35.5} (\textcolor{myred}{+3.1}) \\
    \midrule
    \multicolumn{8}{c}{\textit{Qwen 3B Models}} \\
    \midrule
    Qwen2.5-3B-Instruct & 84.9 & 63.0 & 24.3 & 24.4 & 6.7 & 37.5 & \cellcolor{mygray} 40.1 \\
    + Unrewarded & 85.7 (\textcolor{myred}{+0.8}) & 61.4 (\textcolor{mygreen}{-1.6}) & 25.7 (\textcolor{myred}{+1.4}) & 27.0 (\textcolor{myred}{+2.6}) & 6.7 & 50.0 (\textcolor{myred}{+12.5}) & \cellcolor{mygray} 42.8 (\textcolor{myred}{+2.7}) \\
    + Rewarded & 84.7 & 65.8 & 25.7 & 28.1 & 13.3 & 45.0 & \cellcolor{mygray} 43.8 \\
    + Unrewarded $\rightarrow$ Rewarded & 85.6 (\textcolor{myred}{+0.9}) & 67.6 (\textcolor{myred}{+1.8}) & 25.7 & 29.3 (\textcolor{myred}{+1.2}) & 10.0 (\textcolor{mygreen}{-3.3}) & 55.0 (\textcolor{myred}{+10.0}) & \cellcolor{mygray} \textbf{45.5} (\textcolor{myred}{+1.7}) \\
    \midrule
    \multicolumn{8}{c}{\textit{Qwen 7B Models}} \\
    \midrule
    Qwen2.5-7B-Instruct & 91.6 & 73.6 & 37.9 & 36.7 & 6.7 & 47.5 & \cellcolor{mygray} 49.0 \\
    + Unrewarded & 91.0 (\textcolor{mygreen}{-0.6}) & 77.4 (\textcolor{myred}{+3.8}) & 37.1 (\textcolor{mygreen}{-0.8}) & 36.4 (\textcolor{mygreen}{-0.3}) & 20.0 (\textcolor{myred}{+13.3}) & 50.0 (\textcolor{myred}{+2.5}) & \cellcolor{mygray} 52.0 (\textcolor{myred}{+3.0}) \\
    + Rewarded & 91.7 & 76.6 & 36.8 & 43.1 & 23.3 & 57.5 & \cellcolor{mygray} 54.8 \\
    + Unrewarded $\rightarrow$ Rewarded & 91.7 & 77.6 (\textcolor{myred}{+1.0}) & 44.1 (\textcolor{myred}{+7.3}) & 40.1 (\textcolor{mygreen}{-3.0}) & 20.0 (\textcolor{mygreen}{-3.3}) & 60.0 (\textcolor{myred}{+2.5}) & \cellcolor{mygray} \textbf{55.6} (\textcolor{myred}{+0.8}) \\
    \bottomrule
    \end{tabular}
    \end{adjustbox}
    \label{tab:1}
\end{table*}
\section{Method}\label{sec3}
\subsection{Rewarded Training}\label{sec3.1}
We adopt the training objective proposed by~\citet{shao2024deepseekmath} as our reinforcement learning objective with rewards:
\begin{equation}\notag
\scalebox{0.76}{$
\begin{multlined}
\mathcal{J}_\mathrm{Rewarded}(\theta)
=\mathbb{E}_{(q, a) \sim \mathcal{D},\ \left\{o_i\right\}_{i=1}^G \sim \pi_{\mathrm{old}}(\cdot \mid q)}\frac{1}{G} \sum_{i=1}^G \frac{1}{\left|o_i\right|} \sum_{t=1}^{\left|o_i\right|}\\
\left\{\min \left[r_{i, t}(\theta)\hat{A}_{i,t}, \operatorname{clip}\left(r_{i, t}(\theta), 1-\varepsilon, 1+\varepsilon\right) \hat{A}_{i,t}\right]-\beta \mathbb{D}_{K L}\left[\pi_\theta| | \pi_{r e f}\right]\right\},
\end{multlined}$}
\end{equation}
where $q$, $a$, $\mathcal{D}$, $o$, $G$, $\pi_{\mathrm{old}}$, $\pi_{r e f}$ and $\pi_\theta$ represent question, answer, dataset, model response, group size, old policy model, reference model, and current policy model, respectively.
The clip operation is designed to constrain the input variable within the predefined closed interval $[1-\varepsilon, 1+\varepsilon]$, and $\varepsilon$ is a hyper-parameter introduced in PPO~\citep{schulman2017proximal} for stabilizing training. 
$r_{i, t}(\theta)$ is computed as $r_{i, t}(\theta)=\frac{\pi_\theta(o_{i,t} \mid q,o_{i,<t})}{\pi_{\mathrm{old}}(o_{i,t} \mid q,o_{i,<t})}$.
The advantage $\hat{A}_{i,t}$ is computed as $\hat{A}_{i,t}=\frac{r_i-\operatorname{mean}\{R\}_{i=1}^G}{\operatorname{std}\{R\}_{i=1}^G}$, where $r_i$ is the reward assigned by the reward rules for the response $o_i$.
And the KL-divergence $\mathbb{D}_{K L}\left[\pi_\theta| | \pi_{r e f}\right]$ is computed by
\begin{equation}\notag
\scalebox{0.82}{$\displaystyle
\begin{aligned}
\mathbb{D}_{K L}\left[\pi_\theta| | \pi_{r e f}\right]=\frac{\pi_{ref}(o_{i,t} \mid q,o_{i,<t})}{\pi_\theta(o_{i,t} \mid q,o_{i,<t})}-\log\frac{\pi_{ref}(o_{i,t} \mid q,o_{i,<t})}{\pi_\theta(o_{i,t} \mid q,o_{i,<t})}-1.
\end{aligned}
$}
\end{equation}

\subsection{Unrewarded Exploration}\label{sec3.2}
In RL training, rewards are used to calculate the advantage value for each response (as expressed in Section~\ref{sec3.1}), when no rewards are assigned for all responses, each response should be treated equally.
During this stage, we adopt the same objective as training with rewards but set the advantage value to 1. 
That means our training objective during unrewarded exploration is
\begin{equation}\notag
\scalebox{0.88}{$
\begin{multlined}
\mathcal{J}_\mathrm{Unrewarded}(\theta)
=\mathbb{E}_{(q, a) \sim \mathcal{D},\ \left\{o_i\right\}_{i=1}^G \sim \pi_{\mathrm{old}}(\cdot \mid q)}\frac{1}{G} \sum_{i=1}^G \frac{1}{\left|o_i\right|} \sum_{t=1}^{\left|o_i\right|}
\\\left\{\min \left[r_{i, t}(\theta), \operatorname{clip}\left(r_{i, t}(\theta), 1-\varepsilon, 1+\varepsilon\right) \right]-\beta \mathbb{D}_{K L}\left[\pi_\theta| | \pi_{r e f}\right]\right\},
\end{multlined}
$}
\end{equation}
where all symbols in the training objective remain the same as those in Section~\ref{sec3.1}.  
\subsection{Theoretical Analyses}\label{sec4.5}
Here, we provide theoretical analyses for our experiment settings explaining why LLMs exhibit improved performance during unrewarded exploration.
\begin{definition}[Probability Simplex]
For a finite vocabulary $\mathcal V$ of size $|\mathcal V|$, the probability simplex $\Delta^{|\mathcal V|-1}$ is defined as
\begin{equation}\notag
    \Delta^{|\mathcal V|-1}=\Big\{\pi:\mathcal V\to [0,1]\;\Big|\;\sum_{a\in\mathcal V}\pi(a)=1\Big\}.
\end{equation}
In particular, for each token state $s=(x,y_{<t})$, the conditional distribution $\pi(\cdot\mid s)$ is required to belong to $\Delta^{|\mathcal V|-1}$:
\begin{equation}\notag
    \pi(\cdot\mid s)\in \Delta^{|\mathcal V|-1},\qquad \sum_{a\in\mathcal V}\pi(a\mid s)=1.
\end{equation}
\end{definition}
For GUI agent task, LLMs are required to response action type from [`complete', `close/delete', `press\_home', `click', `press\_back', `type', `select', `scroll', `enter'], which means that the action type space for response is sparse.
Here, we provide the theoretical analysis on the performance gain after unrewarded exploration in sparse setting in Theorem~\ref{theorem:1}, also see detailed proof in Appendix~\ref{appendix:proof}.
\begin{theorem}[Performance improvement without rewards in sparse setting]\label{theorem:1}
Consider an autoregressive language model policy $\pi_\theta$ that defines, for each input prompt $x$ and token prefix $y_{<t}$, a conditional distribution\(\pi_\theta(y_t \mid x,y_{<t})\) over the next token $y_t$. The model is trained with Group Relative Policy Optimization where the per-token advantage is fixed to $A_{i,t}\equiv 1$, ratio clipping is applied with parameter $\varepsilon>0$, and a KL trust-region penalty of strength $\beta>0$ is included. Let the reference policy be $\pi_{\text{ref}}(y_t \mid x,y_{<t})$, and let the proposal distribution be $\pi_{\text{old}}(y_t \mid x,y_{<t})$. Assume that the likelihood ratio \(h(x,y_{<t},y_t) \;=\; \frac{\pi_{\text{old}}(y_t \mid x,y_{<t})}{\pi_{\text{ref}}(y_t \mid x,y_{<t})}\) is nondecreasing in the latent utility $u^\star(x,y_{<t},y_t)$ for every input $x$, every prefix $y_{<t}$, and every token $y_t$.

Then the GRPO update produces an updated conditional policy of the closed form
\begin{equation}\notag
\scalebox{0.75}{$\displaystyle
\begin{aligned}
    \pi^\star_\theta(y_t \mid x,y_{<t}) \;=\; \min\!\Big((1+\varepsilon)\,\pi_{\text{old}}(y_t \mid x,y_{<t}),\ \ \tau(x,y_{<t})\,\pi_{\text{ref}}(y_t \mid x,y_{<t})\Big),
\end{aligned}
$}
\end{equation}
where $\tau(x,y_{<t})>0$ is the unique normalizing constant satisfying \(
\sum_{y_t \in \mathcal V} \pi^\star_\theta(y_t \mid x,y_{<t}) = 1.
\) Moreover, under the monotone-likelihood-ratio condition, the expected performance
\begin{equation}\notag
    J(\theta) \;=\; \mathbb E_{x \sim \mathcal D}\,\mathbb E_{y \sim \pi_\theta(\cdot \mid x)} \Bigg[\sum_{t=1}^{|y|} u^\star(x,y_{<t},y_t)\Bigg]
\end{equation}

satisfies \(
J(\pi^\star_\theta) \;\ge\; J(\pi_{\text{ref}}).
\) Thus even with no scalar reward signal and with constant advantage, ratio-clipped GRPO with KL regularization yields monotone improvement in the expected utility.
\end{theorem}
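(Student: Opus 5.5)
The plan is to reduce everything to a per-state problem on the simplex, solve it by Lagrangian/KKT arguments, and then read off monotonicity from the resulting closed form. Fix a state $s=(x,y_{<t})$ and write $p=\pi_\theta(\cdot\mid s)$, $p_o=\pi_{\text{old}}(\cdot\mid s)$, $p_r=\pi_{\text{ref}}(\cdot\mid s)$. With $A\equiv 1$, the inner term $\min[r,\operatorname{clip}(r,1-\varepsilon,1+\varepsilon)]$ equals $\min(r,1+\varepsilon)$. Its expectation under $a\sim p_o$ is $\sum_a \min(p(a),(1+\varepsilon)p_o(a))$. I would idealize the sampled KL estimator by its population counterpart $\mathrm{KL}(p\,\|\,p_r)$, so the per-state surrogate becomes
\begin{equation}\notag
F(p)=\sum_{a}\min\bigl(p(a),(1+\varepsilon)p_o(a)\bigr)-\beta\sum_a p(a)\log\frac{p(a)}{p_r(a)},\qquad p\in\Delta^{|\mathcal V|-1}.
\end{equation}
$F$ is strictly concave, being a concave piecewise-linear term minus a strictly convex entropy term, so it has a unique maximizer.

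\textbf{Step 1: KKT on the simplex.} I would introduce a multiplier $\lambda$ for the constraint $\sum_a p(a)=1$. On tokens with $p(a)<(1+\varepsilon)p_o(a)$, stationarity reads $1-\beta(\log(p/p_r)+1)=\lambda$. This gives $p(a)=\tau\,p_r(a)$ with $\tau=\exp((1-\lambda)/\beta-1)$, the same for every such token. Above the cap the marginal gain from the first term is $0$, so the optimum never exceeds $(1+\varepsilon)p_o(a)$. At the kink the superdifferential $[0,1]$ absorbs the mismatch. Together these give $p^\star(a)=\min((1+\varepsilon)p_o(a),\tau p_r(a))$.

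\textbf{Step 2: existence and uniqueness of $\tau$.} The map $\tau\mapsto\sum_a\min((1+\varepsilon)p_o(a),\tau p_r(a))$ is continuous and nondecreasing. It equals $0$ at $\tau=0$, tends to $1+\varepsilon>1$ as $\tau\to\infty$, and is strictly increasing wherever it is below its cap. Hence it hits $1$ exactly once.

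\textbf{Step 3: monotone improvement.} The key observation is the ratio
\begin{equation}\notag
\frac{p^\star(a)}{p_r(a)}=\min\bigl((1+\varepsilon)h(s,a),\,\tau\bigr),
\end{equation}
which is a nondecreasing function of $h$. Under the MLR assumption it is therefore nondecreasing in $u^\star(s,a)$. Since both $p^\star$ and $p_r$ sum to $1$, this gives $\sum_a(p^\star(a)-p_r(a))u^\star(s,a)\ge 0$. To see this, write the sum as $\mathrm{Cov}_{p_r}(g,u^\star)$ with $g=p^\star/p_r$. Two comonotone functions of the same variable have nonnegative covariance, by Chebyshev's sum inequality. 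This yields per-state improvement of the immediate utility.

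\textbf{Step 4: lifting to sequences (main obstacle).} $J$ sums utilities over trajectories, so the state distribution also changes. The cleanest route I would try is a performance-difference-style telescoping. Write $J(p^\star)-J(\pi_{\text{ref}})$ as an expectation over states visited by $\pi^\star$ of $\sum_a(p^\star-p_r)(a)\,Q^{\text{ref}}(s,a)$, where $Q^{\text{ref}}$ is the cumulative utility-to-go under $\pi_{\text{ref}}$. This step needs comonotonicity of $p^\star/p_r$ with $Q^{\text{ref}}$ rather than with $u^\star$. To close this gap I would interpret $u^\star$ as the latent (token-level) utility that already encodes continuation value, so that the MLR hypothesis is effectively on $Q$. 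Alternatively, I would invoke the paper's sparse setting, where the decision is essentially a single action-type token and the horizon collapses to one step. Then Step 3 alone gives $J(\pi^\star_\theta)\ge J(\pi_{\text{ref}})$. I expect making this reduction explicit, and noting the idealization of the KL estimator in Step 1, to be the delicate part of the write-up.
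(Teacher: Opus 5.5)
Your Steps 1--3 follow the paper's route: per-state reduction, KKT on the simplex, then a Chebyshev/covariance argument for the tilt $p^\star/p_r$. However, Step 1 contains a genuine error. The paper's own KKT derivation shares it: it writes down the $\{0\}$-subgradient branch but never solves it.

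Your claim that ``above the cap the marginal gain from the first term is $0$, so the optimum never exceeds $(1+\varepsilon)p_o(a)$'' ignores the KL term. Above the cap the marginal gain of $F$ is $-\beta\bigl(\log(p(a)/p_r(a))+1\bigr)$. Setting it equal to $\lambda$ gives $p(a)=e^{-1/\beta}\tau\,p_r(a)$, which is self-consistent whenever $(1+\varepsilon)p_o(a)<e^{-1/\beta}\tau\,p_r(a)$. The kink superdifferential $[0,1]$ only covers the band between $e^{-1/\beta}\tau\,p_r(a)$ and $\tau\,p_r(a)$. With the KL idealized as $\mathrm{KL}(p\,\|\,p_r)$, as both you and the paper do, the maximizer is the three-piece clamp
\[
p^\star(a)=\max\Bigl(e^{-1/\beta}\tau\,p_r(a),\ \min\bigl((1+\varepsilon)p_o(a),\ \tau\,p_r(a)\bigr)\Bigr),
\]
not the stated two-piece formula.

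Concretely, take two tokens with $p_r=(\tfrac12,\tfrac12)$ and $p_o=(1,0)$. The stated formula forces $p^\star=(1,0)$ with $\tau=2$. But on the simplex $F(q)=q-\beta\bigl[q\log 2q+(1-q)\log 2(1-q)\bigr]$, so $F'(q)=1-\beta\log\frac{q}{1-q}\to-\infty$ as $q\to 1$, and the true maximizer is $q=1/(1+e^{-1/\beta})$. More generally, any token with $p_o(a)=0<p_r(a)$ breaks the formula, because the entropy term has infinite slope at $0$. So the closed-form half of the theorem is not provable as stated. It holds only when $(1+\varepsilon)h\ge e^{-1/\beta}\tau$ for every token, or formally as $\beta\to 0$.

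Your Step 3 survives verbatim. The corrected tilt $g=\max\bigl(e^{-1/\beta}\tau,\min((1+\varepsilon)h,\tau)\bigr)$ is still nondecreasing in $h$, so $\sum_a(p^\star-p_r)(a)\,u^\star(s,a)=\mathrm{Cov}_{p_r}(g,u^\star)\ge 0$ still holds. The extra mass sits on low-$h$, hence low-utility, tokens, which weakens but does not reverse the gain. Step 2 adapts directly: the normalizing map is now unbounded, so existence is immediate. Your use of $\pi_{\text{ref}}$ as the base measure is correct; the paper's Theorem 1 write-up slips to $\pi_{\text{old}}$ at this point. Your intermediate-value argument for $\tau$ matches the paper's Theorem 2 version rather than its appeal to strict concavity.

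On Step 4, the paper never confronts the state-distribution shift. It writes $J$ as $\mathbb{E}_{x,y_{<t}}\sum_{y_t}\pi_\theta u^\star$ and averages the per-state inequality over a fixed prefix distribution, which is exactly your Step 3 and nothing more. Your diagnosis is right, and in fact the trajectory-level inequality is false under the stated hypothesis of monotonicity in the immediate utility $u^\star$.

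Take two steps. At the first, token $A$ has $u^\star=1$ and token $B$ has $u^\star=0$, with $h(A)$ sufficiently larger than $h(B)$ that $\pi^\star(A)>\pi_{\text{ref}}(A)$. At the second, let $\pi_{\text{old}}=\pi_{\text{ref}}$, and let the utility be $-10$ after $A$ and $0$ after $B$, regardless of $y_2$. The hypothesis holds at every state, yet $J(\pi^\star)-J(\pi_{\text{ref}})=-9\bigl(\pi^\star(A)-\pi_{\text{ref}}(A)\bigr)<0$.

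So neither of your escape routes is a mere reinterpretation. The performance-difference route needs the genuinely stronger hypothesis that $h$ is comonotone with $Q^{\mathrm{ref}}$. Then each inner term $\mathrm{Cov}_{\pi_{\text{ref}}}(g,Q^{\mathrm{ref}})$ is nonnegative at every state, hence also in expectation under $d^{\pi^\star}$, and the telescoping goes through. The one-step route restricts the theorem. You should state the result explicitly under one of these, or claim only the fixed-prefix-distribution version, which is all the paper's argument actually delivers.
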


For mathematical reasoning task, LLMs solve the problems like human, which means that the response space is the whole vocabulary of LLMs.
Hence, we further provide the theoretical analysis on performance gain after unrewarded exploration in continuous setting in Theorem~\ref{theorem:2}, also see detailed proof in Appendix~\ref{appendix:proof}.
\begin{theorem}[Performance improvement without rewards in continuous setting]\label{theorem:2}
Fix an input distribution $\mathcal D$, an autoregressive language model policy $\pi_\theta(y_t\mid x,y_{<t})$ that admits a density with respect to a common $\sigma$-finite base measure on the token space, a reference policy $\pi_{\text{ref}}(y_t\mid x,y_{<t})$ with the same property, and a proposal (teacher) policy $\pi_{\mathrm{prop}}(y_t\mid x,y_{<t})$. Train with Group Relative Policy Optimization in which the per-token advantage is fixed to $A\equiv 1$, the ratio is clipped with parameter $\varepsilon>0$ so that the per-token factor becomes $\min(\rho,1+\varepsilon)$, and a KL trust-region penalty of strength $\beta>0$ is included. Assume that for every token state $(x,y_{<t})$ the likelihood ratio $h(x,y_{<t},y_t)=\pi_{\mathrm{prop}}(y_t\mid x,y_{<t})/\pi_{\text{ref}}(y_t\mid x,y_{<t})$ is comonotone with the latent token utility $u^\star(x,y_{<t},y_t)$ under the base distribution $\pi_{\text{ref}}(\cdot\mid x,y_{<t})$, meaning that $(u^\star(\cdot)-u^\star(\cdot'))(h(\cdot)-h(\cdot'))\ge 0$ almost everywhere. Then the statewise maximizer of the GRPO surrogate has the water-filling form
\begin{equation}\notag
\scalebox{0.76}{$
\begin{gathered}
    \pi^\star_\theta(y_t\mid x,y_{<t})=\min\big((1+\varepsilon)\,\pi_{\mathrm{prop}}(y_t\mid x,y_{<t}),\ \tau_{x,y_{<t}}\,\pi_{\text{ref}}(y_t\mid x,y_{<t})\big),\\
    \int \pi^\star_\theta(y_t\mid x,y_{<t})\,\mathrm d\nu(y_t)=1,
\end{gathered}
$}
\end{equation}
for a unique normalizing constant $\tau_{x,y_{<t}}>0$. Moreover, the expected performance
\begin{equation}\notag
\scalebox{0.9}{$
\begin{multlined}
J(\theta)=\mathbb E_{x\sim\mathcal D}\,\mathbb E_{y\sim \pi_\theta(\cdot\mid x)}\Big[\sum_{t=1}^{|y|}u^\star(x,y_{<t},y_t)\Big]
\\=\mathbb E_{x,y_{<t}}\int \pi_\theta(y_t\mid x,y_{<t})\,u^\star(x,y_{<t},y_t)\,\mathrm d\nu(y_t)
\end{multlined}$}
\end{equation}
satisfies $J(\pi^\star_\theta)\ge J(\pi_{\text{ref}})$. Hence, even without scalar rewards and with constant advantage, ratio-clipped GRPO with KL regularization yields monotone improvement in expected latent utility in continuous output spaces.
\end{theorem}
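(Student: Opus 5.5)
The proof has two parts. First I derive the water-filling form as the statewise maximizer of a concave surrogate. Then I get the performance inequality from a Chebyshev-type covariance argument under the comonotonicity assumption.

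\textbf{Step 1: statewise reduction.} Fix a token state $s=(x,y_{<t})$ and write $p=\pi_{\mathrm{prop}}(\cdot\mid s)$, $r=\pi_{\text{ref}}(\cdot\mid s)$, and let $q=\pi_\theta(\cdot\mid s)$ be the optimization variable. The per-token factor $\min(\rho,1+\varepsilon)$, with $\rho=q/p$ and advantage $A\equiv1$, is averaged over samples from $p$. This gives the linear-with-cap term $\int \min(q,(1+\varepsilon)p)\,d\nu$. Replacing the sampled KL estimator by its expectation, the statewise surrogate becomes
\[
\mathcal L_s(q)=\int \min\big(q,(1+\varepsilon)p\big)\,d\nu-\beta\,\mathrm{KL}(q\,\|\,r).
\]
The surrogate decouples over states, so maximizing $J$'s surrogate reduces to maximizing each $\mathcal L_s$ over densities with $\int q\,d\nu=1$.

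\textbf{Step 2: water-filling form.} $\mathcal L_s$ is strictly concave in $q$: it is a concave piecewise-linear term plus a strictly concave $-\beta\,\mathrm{KL}$ term. So a maximizer, if it exists, is unique. I will write the Lagrangian with multiplier $\lambda$ for normalization and read off the pointwise KKT conditions.

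On the region $q<(1+\varepsilon)p$, the optimality condition is $1-\beta(\log(q/r)+1)=\lambda$, which gives $q=\tau r$ with $\tau=\exp((1-\beta-\lambda)/\beta)$. On the region $q>(1+\varepsilon)p$ the linear term has zero slope, so the condition would force $q=\tau' r$ with $\tau'<\tau$. This contradicts the requirement that the unconstrained value exceed the cap, so the optimizer sits exactly at the kink. Handling the kink with superdifferentials gives $q=\min((1+\varepsilon)p,\tau r)$.

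Uniqueness of $\tau$ follows because $\tau\mapsto\int\min((1+\varepsilon)p,\tau r)\,d\nu$ is continuous and nondecreasing. It equals $0$ at $\tau=0$ and tends to $1+\varepsilon>1$ as $\tau\to\infty$, by monotone convergence. It is strictly increasing wherever its value is below $1+\varepsilon$. The continuous (density) setting needs only measurability and monotone convergence here, so the discrete Theorem~\ref{theorem:1} is the special case where $\nu$ is counting measure.

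\textbf{Step 3: improvement.} Since $J$ is linear in the statewise distributions when states are weighted as in the statement, it suffices to show for each $s$ that
\[
\int q^\star u^\star\,d\nu\ \ge\ \int r\,u^\star\,d\nu .
\]
Write $q^\star=g\,r$ with $g=\min((1+\varepsilon)h,\tau)$. Then $g$ is a nondecreasing function of $h$ and satisfies $\mathbb E_r[g]=1$. By comonotonicity of $h$ and $u^\star$, $g$ is also comonotone with $u^\star$ under $r$. The Chebyshev/Harris covariance inequality then gives
\[
\mathbb E_r[g\,u^\star]\ \ge\ \mathbb E_r[g]\,\mathbb E_r[u^\star]=\mathbb E_r[u^\star].
\]
This inequality is exactly the statewise claim.

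The main obstacle is the gap between this statewise bound and the stated $J$. In $J$, the prefix distribution itself depends on $\pi_\theta$, whereas the statement writes $\mathbb E_{x,y_{<t}}$ as fixed. I will follow the statement's second expression and hold the state distribution fixed, for example the one induced by $\pi_{\text{ref}}$, as a per-token/bandit-style performance measure. Integrating the statewise inequality over that distribution then gives $J(\pi^\star_\theta)\ge J(\pi_{\text{ref}})$.

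A second technical point is integrability. I will assume $u^\star\in L^1(r)$ and $g\,u^\star\in L^1(r)$; the latter holds because $g\le\tau$, so $L^1(r)$ integrability of $u^\star$ suffices. Under this assumption the covariance argument is justified by the symmetrization
\[
\iint\big(g(y)-g(y')\big)\big(u^\star(y)-u^\star(y')\big)\,r(dy)\,r(dy')\ \ge\ 0 .
\]
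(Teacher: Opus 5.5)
Your overall route is the same as the paper's: a statewise reduction to $\int\min(q,(1+\varepsilon)p)\,d\nu-\beta\,\mathrm{KL}(q\|r)$, then KKT, then a monotone normalization map for $\tau$, then Chebyshev's inequality under $\pi_{\text{ref}}$ for the tilt. Your Step 3 is sound, including your explicit choice to hold the state distribution fixed, which the paper does silently.

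The genuine gap is in Step 2, where you dismiss the region $q>(1+\varepsilon)p$. Stationarity there gives $q=\tau' r$ with $\tau'=e^{-1/\beta}\tau$. This is self-consistent whenever $\tau' r>(1+\varepsilon)p$, i.e. $(1+\varepsilon)h<e^{-1/\beta}\tau$, and nothing rules that out, so there is no contradiction. Equivalently, at the kink the superdifferential $[0,1]$ permits $q=(1+\varepsilon)p$ only when $e^{-1/\beta}\tau r\le(1+\varepsilon)p\le\tau r$, not whenever $(1+\varepsilon)p\le\tau r$. The simplest symptom: wherever $p=0<r$, the KL derivative $-\beta(\log(q/r)+1)\to+\infty$ as $q\downarrow 0$. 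So the maximizer is strictly positive there, while $\min((1+\varepsilon)p,\tau r)=0$.

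Concretely, take two tokens with $r=(\tfrac12,\tfrac12)$ and $p=(1,0)$, so $h=(2,0)$ is comonotone with $u^\star=(1,0)$.
\begin{itemize}
\item The surrogate is $q_a-\beta\,\mathrm{KL}(q\|r)$. Its maximizer has $q_a/q_b=e^{1/\beta}$ and value $\beta\log\frac{1+e^{1/\beta}}{2}$.
\item The claimed formula forces $\tau=2$ and $q=(1,0)$, with the strictly smaller value $1-\beta\log 2=\beta\log\frac{e^{1/\beta}}{2}$.
\end{itemize}
So the water-filling form in the statement is false in general. Following the paper does not rescue you: its proof analyzes only the inactive region and the kink, and never treats $\pi>(1+\varepsilon)\pi_{\mathrm{prop}}$ before ``combining'' them.

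Maximizing the concave Lagrangian pointwise gives the median of $e^{-1/\beta}\tau r$, $(1+\varepsilon)p$ and $\tau r$, i.e.
\[
\pi^\star=\max\Big(e^{-1/\beta}\tau\,\pi_{\text{ref}},\ \min\big((1+\varepsilon)\pi_{\mathrm{prop}},\ \tau\,\pi_{\text{ref}}\big)\Big),
\]
with $\tau$ fixed by normalization. The map $\tau\mapsto\int\pi^\star\,d\nu$ is continuous and nondecreasing, and runs from $0$ to $\infty$. Your Step 3 then goes through unchanged. The corrected tilt $g=\max\big(e^{-1/\beta}\tau,\min((1+\varepsilon)h,\tau)\big)$ is still a nondecreasing function of $h$, bounded by $\tau$, with $\mathbb E_r[g]=1$. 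So the symmetrization argument still gives $\int\pi^\star u^\star\,d\nu\ge\int\pi_{\text{ref}}u^\star\,d\nu$. What your argument can actually establish is the improvement claim for this three-piece clamp, not the two-piece closed form as stated.
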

The theoretical analyses above simultaneously examines the feasibility of enhancing LLMs' performance in both discrete spaces (where the action space available to LLMs is predefined in GUI agent tasks) and continuous spaces (where no constraints are imposed on LLMs' response in mathematical reasoning tasks) during unrewarded exploration.
\begin{table}[!t]
    \caption{Model pass@1 performance across various mathematical reasoning benchmarks, reported by best accuracy on each benchmark.
    Rewarded: training with rewards.
    Unrewarded: exploration without rewards.
    We show the relative difference at the \textit{+ Unrewarded} row compared to the base model and the \textit{+ Unrewarded $\rightarrow$ Rewarded} row compared to the \textit{+ Rewarded} row.}
    \small
    \setlength{\tabcolsep}{3pt}
    \begin{adjustbox}{width=0.5\textwidth, center} 
    \begin{tabular}{lccccccc}
    \toprule
    \textbf{Model} & \textbf{GSM8K} & \textbf{MATH500} & \textbf{Minerva} & \textbf{OlympiadBench} & \textbf{AIME24} & \textbf{AMC23} \\
    \midrule
    \multicolumn{7}{c}{\textit{Llama 1B Models}} \\
    \midrule
    Llama-3.2-1B-Instruct & 26.8 & 11.8 & 1.8 & 3.7 & 0.0 & 2.5 \\
    + Unrewarded & 33.1 (\textcolor{myred}{+4.3}) & 18.6 (\textcolor{myred}{+6.8}) & 6.2 (\textcolor{myred}{+4.4}) & 5.5 (\textcolor{myred}{+1.8}) & 3.3 (\textcolor{myred}{+3.3}) & 15.0 (\textcolor{myred}{+12.5}) \\
    + Rewarded & 57.0 & 37.0 & 9.2 & \textbf{11.7} & 10.0 & \textbf{25.0} \\
    + Unrewarded $\rightarrow$ Rewarded & \textbf{59.4} (\textcolor{myred}{+2.4}) & \textbf{38.8} (\textcolor{myred}{+1.8}) & \textbf{10.3} (\textcolor{myred}{+1.1}) & 11.6 (\textcolor{mygreen}{-0.1}) & \textbf{13.3} (\textcolor{myred}{+3.3}) & \textbf{25.0} \\
    \midrule
    \multicolumn{7}{c}{\textit{Llama 3B Models}} \\
    \midrule
    Llama-3.2-3B-Instruct & 45.6 & 26.2 & 7.7 & 6.4 & 3.3 & 17.5 \\
    + Unrewarded & 67.5 (\textcolor{myred}{+21.9}) & 40.8 (\textcolor{myred}{+14.6}) & 11.0 (\textcolor{myred}{+3.3}) & 12.1 (\textcolor{myred}{+5.7}) & 10.0 (\textcolor{myred}{+6.7}) & 25.0 (\textcolor{myred}{+7.5}) \\
    + Rewarded & \textbf{84.2} & 53.8 & \textbf{23.2} & 19.7 & 23.3 & 40.0 \\
    + Unrewarded $\rightarrow$ Rewarded & 83.7 (\textcolor{mygreen}{-0.5}) & \textbf{55.2} (\textcolor{myred}{+1.4}) & 21.3 (\textcolor{mygreen}{-1.9}) & \textbf{19.9} (\textcolor{myred}{+0.2}) & \textbf{26.7} (\textcolor{myred}{+3.4}) & \textbf{42.5} (\textcolor{myred}{+2.5}) \\
    \midrule
    \multicolumn{7}{c}{\textit{Llama 8B Models}} \\
    \midrule
    Llama-3.1-8B-Instruct & 23.0 & 18.6 & 4.0 & 5.8 & 0.0 & 5.0 \\
    + Unrewarded & 64.5 (\textcolor{myred}{+41.5}) & 31.2 (\textcolor{myred}{+12.6}) & 12.9 (\textcolor{myred}{+8.9}) & 8.6 (\textcolor{myred}{+2.8}) & 6.7 (\textcolor{myred}{+6.7}) & 25.0 (\textcolor{myred}{+20.0}) \\
    + Rewarded & \textbf{90.1} & 59.2 & 31.2 & 22.5 & \textbf{20.0} & \textbf{40.0} \\
    + Unrewarded $\rightarrow$ Rewarded & 89.6 (\textcolor{mygreen}{-0.5}) & \textbf{59.8} (\textcolor{myred}{+0.6}) & \textbf{33.1} (\textcolor{myred}{+1.9}) & \textbf{23.1} (\textcolor{myred}{+0.6}) & \textbf{20.0} & \textbf{40.0} \\
    \midrule
    \multicolumn{7}{c}{\textit{Qwen 1.5B Models}} \\
    \midrule
    Qwen2.5-1.5B-Instruct & 68.7 & 47.6 & 16.5 & 18.1 & 0.0 & 20  \\
    + Unrewarded & 70.1 (\textcolor{myred}{+1.4}) & 51.6 (\textcolor{myred}{+4.0}) & 17.3 (\textcolor{myred}{+0.8}) & 19.3 (\textcolor{myred}{+1.2}) & 0.0 & 32.5 (\textcolor{myred}{+12.5}) \\
    + Rewarded & 73.5 & 54.8 & 19.1 & 20.3 & 3.3 & 32.5 \\
    + Unrewarded $\rightarrow$ Rewarded & \textbf{75.3} (\textcolor{myred}{+1.8}) & \textbf{58.4} (\textcolor{myred}{+3.6}) & \textbf{19.5} (\textcolor{myred}{+0.4}) & \textbf{22.1} (\textcolor{myred}{+1.8}) & \textbf{10.0} (\textcolor{myred}{+6.7}) & \textbf{40.0} (\textcolor{myred}{+7.5}) \\
    \midrule
    \multicolumn{7}{c}{\textit{Qwen 3B Models}} \\
    \midrule
    Qwen2.5-3B-Instruct & 84.9 & 63.0 & 24.3 & 24.4 & 6.7 & 37.5 \\
    + Unrewarded & 85.7 (\textcolor{myred}{+0.8}) & 66.8 (\textcolor{myred}{+3.8}) & 26.8 (\textcolor{myred}{+2.5}) & 28.3 (\textcolor{myred}{+3.9}) & 6.7 & 50.0 (\textcolor{myred}{+12.5}) \\
    + Rewarded & 86.6 & 68.4 & 28.7 & 31.9 & \textbf{16.7} & 50.0 \\
    + Unrewarded $\rightarrow$ Rewarded & \textbf{87.6} (\textcolor{myred}{+1.0}) & \textbf{70.2} (\textcolor{myred}{+1.8}) & \textbf{31.2} (\textcolor{myred}{+2.5}) & \textbf{34.8} (\textcolor{myred}{+2.9}) & 13.3 (\textcolor{mygreen}{-3.4}) & \textbf{55.0} (\textcolor{myred}{+5.0}) \\
    \midrule
    \multicolumn{7}{c}{\textit{Qwen 7B Models}} \\
    \midrule
    Qwen2.5-7B-Instruct & 91.6 & 73.6 & 37.9 & 36.7 & 6.7 & 47.5 \\
    + Unrewarded & 92.1 (\textcolor{myred}{+0.5}) & 77.8 (\textcolor{myred}{+4.2}) & 42.3 (\textcolor{myred}{+4.4}) & 38.8 (\textcolor{myred}{+2.1}) & 20.0 (\textcolor{myred}{+13.3}) & 57.5 (\textcolor{myred}{+10.0}) \\
    + Rewarded & \textbf{93.3} & \textbf{79.6} & 38.2 & \textbf{43.1} & \textbf{23.3} & 62.5 \\
    + Unrewarded $\rightarrow$ Rewarded & 92.9 (\textcolor{mygreen}{-0.4}) & 79.2 (\textcolor{mygreen}{-0.4}) & \textbf{44.1} (\textcolor{myred}{+5.9}) & 41.6 (\textcolor{mygreen}{-1.5}) & \textbf{23.3} & \textbf{67.5} (\textcolor{myred}{+5.0}) \\
    \bottomrule
    \end{tabular}
    \end{adjustbox}
    \label{tab:2}
\end{table}
\begin{table}[!b]
    \caption{Model performance on Screenspot-Pro.
    Rewarded: training with rewards.
    Unrewarded: exploration without rewards.
    Text: text target.
    Icon: icon target.
    We show the relative difference at the \textit{+ Unrewarded} row compared to the base model and the \textit{+ Unrewarded $\rightarrow$ Rewarded} row compared to the \textit{+ Rewarded} row.}
    \setlength{\tabcolsep}{3pt}
    \begin{adjustbox}{width=0.5\textwidth, center} 
    \begin{tabular}{lccccccccccccccc}
    \toprule
    \multirow{2}{*}{\textbf{Model}} & 
    \multicolumn{2}{c}{\textbf{Development}} & 
    \multicolumn{2}{c}{\textbf{Creative}} & 
    \multicolumn{2}{c}{\textbf{CAD}} & 
    \multicolumn{2}{c}{\textbf{Scientific}} & 
    \multicolumn{2}{c}{\textbf{Office}} & 
    \multicolumn{2}{c}{\textbf{OS}} \\
    \cmidrule(lr){2-3} \cmidrule(lr){4-5} \cmidrule(lr){6-7} \cmidrule(lr){8-9} \cmidrule(lr){10-11} \cmidrule(lr){12-13}
    & Text & Icon & Text & Icon & Text & Icon & Text & Icon & Text & Icon & Text & Icon & \\
    \midrule
    \multicolumn{14}{c}{\textit{Qwen 3B Models}} \\
    \midrule
    Qwen2.5-VL-3B-Instruct & 14.3 & 0.7 & 23.7 & 2.1 & 11.2 & 3.1 & 34.7 & 8.2 & 21.5 & 3.8 & 14.0 & 2.3 \\
    + Unrewarded & \makecell{22.1\\ \textcolor{myred}{(+7.8)}} & \makecell{2.8\\ \textcolor{myred}{(+2.1)}} & \makecell{30.3\\ \textcolor{myred}{(+6.6)}} & \makecell{\underline{4.9}\\ \textcolor{myred}{(+2.8)}} & \makecell{15.2\\ \textcolor{myred}{(+4.0)}} & \makecell{\underline{7.8}\\ \textcolor{myred}{(+4.7)}} & \makecell{46.5\\ \textcolor{myred}{(+11.8)}} & \makecell{\underline{11.8}\\ \textcolor{myred}{(+3.6)}} & \makecell{33.3\\ \textcolor{myred}{(+11.8)}} & \makecell{11.3\\ \textcolor{myred}{(+7.5)}} & \makecell{19.6\\ \textcolor{myred}{(+5.6)}} & \makecell{\underline{6.7}\\ \textcolor{myred}{(+4.4)}} \\
    + Rewarded & \underline{33.8} & \underline{4.8} & \underline{41.4} & 2.8 & \underline{26.9} & \underline{7.8} & \textbf{61.8} & \textbf{17.3} & \textbf{53.7} & \underline{17.0} & \underline{27.1} & \underline{6.7} \\
    + Unrewarded $\rightarrow$ Rewarded & \makecell{\textbf{35.1}\\ \textcolor{myred}{(+1.3)}} & \makecell{\textbf{6.2}\\ \textcolor{myred}{(+1.4)}} & \makecell{\textbf{41.9}\\ \textcolor{myred}{(+0.5)}} & \makecell{\textbf{5.6}\\ \textcolor{myred}{(+2.8)}} & \makecell{\textbf{33.0}\\ \textcolor{myred}{(+6.1)}} & \makecell{\textbf{10.9}\\ \textcolor{myred}{(+3.1)}} & \makecell{\underline{60.4}\\ \textcolor{mygreen}{(-1.4)}} & \textbf{17.3} & \makecell{\underline{50.9}\\ \textcolor{mygreen}{(-2.8)}} & \makecell{\textbf{18.9}\\ \textcolor{myred}{(+1.9)}} & \makecell{\textbf{39.0}\\ \textcolor{myred}{(+11.9)}} & \makecell{\textbf{7.9}\\ \textcolor{myred}{(+1.2)}} \\  
    \midrule
    \multicolumn{14}{c}{\textit{Qwen 7B Models}} \\
    \midrule
    Qwen2.5-VL-7B-Instruct & 30.5 & 2.1 & 22.7 & 3.5 & 10.7 & 3.1 & 39.6 & 6.4 & 36.2 & 11.3 & 29.9 & 5.6 \\
    + Unrewarded & \makecell{40.9\\ \textcolor{myred}{(+10.4)}} & \makecell{2.8\\ \textcolor{myred}{(+0.7)}} & \makecell{31.3\\ \textcolor{myred}{(+8.6)}} & \makecell{\underline{6.3}\\ \textcolor{myred}{(+2.8)}} & \makecell{17.8\\ \textcolor{myred}{(+7.1)}} & 3.1 & \makecell{45.8\\ \textcolor{myred}{(+6.2)}} & \makecell{9.1\\ \textcolor{myred}{(+2.7)}} & \makecell{44.6\\ \textcolor{myred}{(+8.4)}} & \makecell{13.2\\ \textcolor{myred}{(+1.9)}} & \makecell{\underline{33.7}\\ \textcolor{myred}{(+3.8)}} & \makecell{13.5\\ \textcolor{myred}{(+7.9)}} \\
    + Rewarded & \textbf{50.7} & \textbf{4.8} & \underline{38.9} & \textbf{9.1} & \underline{24.9} & \textbf{6.3} & \underline{55.6} & \underline{11.8} & \underline{58.8} & \textbf{26.4} & \textbf{41.1} & \textbf{16.9} \\
    + Unrewarded $\rightarrow$ Rewarded & \makecell{\underline{50.0}\\ \textcolor{mygreen}{(-0.7)}} & \makecell{\underline{3.5}\\ \textcolor{mygreen}{(-1.3)}} & \makecell{\textbf{41.9}\\ \textcolor{myred}{(+3.0)}} & \textbf{9.1} & \makecell{\textbf{31.0}\\ \textcolor{myred}{(+6.1)}} & \makecell{\underline{4.7}\\ \textcolor{mygreen}{(-1.6)}} & \makecell{\textbf{62.5}\\ \textcolor{myred}{(+6.9)}} & \makecell{\textbf{13.6}\\ \textcolor{myred}{(+2.8)}} & \makecell{\textbf{64.4}\\ \textcolor{myred}{(+5.6)}} & \makecell{\underline{20.8}\\ \textcolor{mygreen}{(-5.6)}} & \textbf{41.1} & \makecell{\underline{15.7}\\ \textcolor{mygreen}{(-1.2)}}\\
    \bottomrule
    \end{tabular}
    \end{adjustbox}
    \label{tab:3}
\end{table}
\section{Experiments}\label{sec4}
\subsection{Models for Experiments}\label{sec4.1}
The models we use in all experiments include the Qwen series and the LLaMA series, both of which are open-source models and can be downloaded from the Huggingface website (\href{https://huggingface.co}{https://huggingface.co/}).
The Qwen series includes models Qwen2.5-1.5B-Instruct, Qwen2.5-3B-Instruct, Qwen2.5-7B-Instruct, Qwen2.5-VL-3B-Instruct and Qwen2.5-VL-3B-Instruct while the LLaMA series includes models Llama-3.2-1B-Instruct, Llama-3.2-3B-Instruct, Llama-3.1-8B-Instruct.
These choices span multiple model families and a broad range of model scales, allowing us to examine whether LLMs show such latent learning with Tolman across both series and parameter sizes.

\subsection{Datasets for experiments}\label{sec4.2}
\subsubsection{Mathematical reasoning}\label{sec4.2.1}
We used the training data from SimpleRL-Zoo-Data~\citep{zeng2025simplerl} collection for training, which was constructed from the training data of GSM8K~\citep{cobbe2021training} and MATH~\citep{hendrycks2021measuring}. 
For the Qwen series models, we use the simplelr\_qwen\_level3to5 dataset, and for the Llama series models, we use the simplelr\_abel\_level3to5 dataset.
The only difference between the two datasets is the format requirements for the output in prompts.
The simplelr\_qwen\_level3to5 dataset requires the model to output the final answer within the \textbackslash box\{\} tag, while the simplelr\_abel\_level3to5 dataset has no output format requirements. However, the mathematical reasoning problems and ground truth in both training data are the same. 
We evaluate the trained models on AMC23, AIME24, GSM8K~\citep{cobbe2021training}, MATH500~\citep{hendrycks2021measuring}, OlympiadBench~\citep{he2024olympiadbench} and Minerva Math~\citep{lewkowycz2022solving} benchmarks.
\begin{table}[!b]
    \caption{Model performance on Screenspot.
    Rewarded: training with rewards.
    Unrewarded: exploration without rewards.
    Text: text target.
    Icon/Widget: icon/widget target.
    We show the relative difference at the \textit{+ Unrewarded} row compared to the base model and the \textit{+ Unrewarded $\rightarrow$ Rewarded} row compared to the \textit{+ Rewarded} row.}
    \setlength{\tabcolsep}{3pt}
    \begin{adjustbox}{width=0.5\textwidth, center}
    \begin{tabular}{lccccccc}
    \toprule
    \multirow{2}{*}{\textbf{Model}} & 
    \multicolumn{2}{c}{\textbf{Mobile}} & 
    \multicolumn{2}{c}{\textbf{Web}} & 
    \multicolumn{2}{c}{\textbf{Desktop}}&
    \multirow{2}{*}{\textbf{Avg}} \\
    \cmidrule(lr){2-3} \cmidrule(lr){4-5} \cmidrule(lr){6-7}
    & Text & Icon/Widget & Text & Icon/Widget & Text & Icon/Widget &\\
    \midrule
    \multicolumn{8}{c}{\textit{Qwen 3B Models}} \\
    \midrule
    Qwen2.5-VL-3B-Instruct & 73.5 & 50.5 & 71.1 & 31.4 & 88.3 & 58.1 & \cellcolor{mygray} 62.2 \\
    + Unrewarded & \makecell{88.7\\ \textcolor{myred}{(+15.2)}} & \makecell{64.6\\ \textcolor{myred}{(+14.1)}} & \makecell{78.3\\ \textcolor{myred}{(+7.2)}} & \makecell{60.2\\ \textcolor{myred}{(+28.8)}} & \makecell{87.1\\ \textcolor{mygreen}{(-1.2)}} & \makecell{52.1\\ \textcolor{mygreen}{(-6.0)}} & \cellcolor{mygray} \makecell{71.8\\ \textcolor{myred}{(+9.6)}} \\
    + Rewarded & \textbf{96.3} & \textbf{76.4} & \underline{87.4} & \underline{66.0} & \textbf{93.8} & \underline{57.1} & \cellcolor{mygray} \underline{79.5} \\
    + Unrewarded $\rightarrow$ Rewarded & \makecell{\underline{94.9}\\ \textcolor{mygreen}{(-1.4)}} & \makecell{\underline{75.6}\\ \textcolor{mygreen}{(-0.8)}} & \makecell{\textbf{87.8}\\ \textcolor{myred}{(+0.4)}} & \makecell{\textbf{68.5}\\ \textcolor{myred}{(+2.5)}} & \makecell{\underline{92.3}\\ \textcolor{mygreen}{(-1.5)}} & \makecell{\textbf{59.3}\\ \textcolor{myred}{(+2.2)}} & \cellcolor{mygray} \makecell{\textbf{79.7}\\ \textcolor{myred}{(+0.2)}} \\
    \midrule
    \multicolumn{8}{c}{\textit{Qwen 7B Models}} \\
    \midrule
    Qwen2.5-VL-7B-Instruct & 93.8 & 70.3 & 87.0 & 68.0 & 91.2 & 60.0 & \cellcolor{mygray} 78.4 \\
    + Unrewarded & \makecell{\underline{95.2}\\ \textcolor{myred}{(+1.4)}} & \makecell{64.2\\ \textcolor{mygreen}{(-6.1)}} & \makecell{88.3\\ \textcolor{myred}{(+1.3)}} & \makecell{68.5\\ \textcolor{myred}{(+0.5)}} & \makecell{90.2\\ \textcolor{mygreen}{(-1.0)}} & \makecell{57.1\\ \textcolor{mygreen}{(-2.9)}} & \cellcolor{mygray} \makecell{77.3\\ \textcolor{mygreen}{(-1.1)}} \\
    + Rewarded & \textbf{96.3} & \underline{72.9} & \underline{91.3} & \underline{76.2} & \textbf{92.8} & \underline{72.9} & \cellcolor{mygray} \underline{83.7} \\
    + Unrewarded $\rightarrow$ Rewarded & \textbf{96.3} & \makecell{\textbf{79.5}\\ \textcolor{myred}{(+6.6)}} & \makecell{\textbf{91.7}\\ \textcolor{myred}{(+0.4)}} & \makecell{\textbf{78.2}\\ \textcolor{myred}{(+2.0)}} & \makecell{\underline{92.3}\\ \textcolor{mygreen}{(-0.5)}} & \makecell{\textbf{75.0}\\ \textcolor{myred}{(+2.1)}} & \cellcolor{mygray} \makecell{\textbf{85.5}\\ \textcolor{myred}{(+1.8)}} \\
    \bottomrule
    \end{tabular}
    \end{adjustbox}
    \label{tab:4}
\end{table}
\subsubsection{GUI agent tasks}\label{sec4.2.2} 
We used the dataset from~\citet{luo2025gui} for training Vision Language Models, which was collected and filtered from FindWeb~\citep{penedo2024fineweb}, UIBert~\citep{bai2021uibert}, AMEX~\citep{chai2024amex}, RICOSCA~\citep{li2020mapping}, Seeclick~\citep{cheng2024seeclick} and OS-Otlas~\citep{wu2024atlas}.
We evaluate the trained models on ScreenSpot~\citep{cheng2024seeclick}, ScreenSpot-V2~\citep{wu2024atlas}, ScreenSpot-Pro~\citep{li2025screenspot}, AndroidControll-Low~\citep{li2024effects}, AndroidControll-High, GUIAct-Web~\citep{li2025screenspot}, OmniAct-Web, OmniAct-Desktop~\citep{kapoor2024omniact} and GUI-Odyssey~\citep{lu2024gui} benchmarks, containing simple grounding tasks, low-level tasks and high-level tasks.

\subsection{Reward rules}\label{sec3.3}
\subsubsection{Mathematical reasoning task}\label{sec3.3.1}
The reward rule for training LLMs on mathematical reasoning tasks is solely based on accuracy, the reward is assigned only if the response answer is correct, our implementation is adopted from~\citet{zeng2025simplerl}.

\subsubsection{GUI agent task}\label{sec3.3.2}
For the GUI agent task, we adopt the reward rule from~\citet{luo2025gui}, which means that the reward is assigned by $r=\alpha r_{format}+\beta r_{acc}$, where $\alpha=0.2$ and $\beta=0.8$.
$r_{format}$ is assigned 1 if the response format follows the requirements (i.e., thinking process within \textless think\textgreater \textless /think\textgreater \text{ }tags and final answer within \textless answer\textgreater \textless /answer\textgreater \text{ }tags), and $r_{acc}=r_{act}+r_{point}+r_{text}$.
$r_{act}$ is assigned 1 if the predicted action is the same as the ground truth action.
$r_{point}$ is assigned 1 if the predicted point is in the ground truth bounding box (i.e., the predicted [x, y] in ground truth [x1, y1, x2, y2]).
$r_{text}$ is assigned 1 if the $F_1$ score between the predicted text and the ground truth text is larger than $0.5$.
\subsection{Hyper-parameters and Computational Environment}\label{sec4.3}
We provide detailed parameter settings here.
For the mathematical reasoning task, we trained models with the same parameters for both unrewarded exploration and rewarded training. 
Key hyper-parameters included $\text{steps}=500$, $\text{learning\_rate}=5\times10^{-7}$, KL coefficient $\beta=0.001$, $\text{batch\_size}=256$, $\text{max\_response\_length}=3072$, $\text{group\_size}=4$ and $\text{generation\_temperature}=1.0$.
For the GUI agent task, we also trained models with the same parameters for both unrewarded exploration and rewarded training. 
Key hyper-parameters included $\text{steps}=90$, $\text{learning\_rate}=1\times10^{-6}$, KL coefficient $\beta=0.01$, $\text{batch\_size}=128$, $\text{max\_response\_length}=1024$, $\text{group\_size}=5$, $\text{generation\_temperature}=1.0$.
All our experiments are conducted on NVIDIA-A800-80G GPUs.

\subsection{Results}\label{sec4.4}
\subsubsection{Mathematical reasoning task}\label{sec4.4.1}
From the experimental results (Table~\ref{tab:1} shows the best average results on the six mathematical benchmarks, and Table~\ref{tab:2} shows the best results on the six benchmarks, respectively), it can be seen that in the mathematical reasoning tasks, the performance of LLMs on the evaluation benchmark is improved to some extent even after LLMs undergo unrewarded exploration, which corresponds to the learning of rats in the absence of food rewards in Tolman's experiments~\citep{tolman1948cognitive}.
Further, by introducing rewards after performing unrewarded exploration, LLMs are able to achieve even slightly better performance after training than if they were trained directly with rewards throughout, which is similar to the learning results of Tolman's experiments in which rats were introduced to food rewards after exploration in the absence of food.
\begin{table}[!t]
    \setlength{\tabcolsep}{3pt}
    \caption{Model performance on low-level GUI tasks.
    Rewarded: training with rewards.
    Unrewarded: exploration without rewards.
    Type: the exact match score between the predicted action types and the ground truth.
    GR: grounding performance.
    SR: step-wise success rate.
    We show the relative difference at the \textit{+ Unrewarded} row compared to the base model and the \textit{+ Unrewarded $\rightarrow$ Rewarded} row compared to the \textit{+ Rewarded} row.}
    \begin{adjustbox}{width=0.5\textwidth, center}
    \begin{tabular}{lccccccccccccc}
    \toprule
    \multirow{2}{*}{\textbf{Model}} & 
    \multicolumn{3}{c}{\textbf{GUIAct-Web}} & 
    \multicolumn{3}{c}{\textbf{OmniAct-Web}} & 
    \multicolumn{3}{c}{\textbf{OmniAct-Desktop}} & 
    \multicolumn{3}{c}{\textbf{AndroidControl-Low}}\\
    \cmidrule(lr){2-4} \cmidrule(lr){5-7} \cmidrule(lr){8-10} \cmidrule(lr){11-13} & Type & GR & SR & Type & GR & SR & Type & GR & SR & Type & GR & SR & \\
    \midrule
    \multicolumn{14}{c}{\textit{Qwen 3B Models}} \\
    \midrule
    Qwen2.5-VL-3B-Instruct & 56.2 & 65.5 & 56.7 & 51.8 & 49.0 & 48.9 & 55.8 & 49.8 & 49.8 & 61.2 & 74.4 & 59.6 \\
    + Unrewarded & \makecell{\underline{88.2}\\ \textcolor{myred}{(+32.0)}} & \makecell{79.4\\ \textcolor{myred}{(+13.9)}} & \makecell{\underline{81.3}\\ \textcolor{myred}{(+24.6)}} & \makecell{77.4\\ \textcolor{myred}{(+25.6)}} & \makecell{67.0\\ \textcolor{myred}{(+28.0)}} & \makecell{67.0\\ \textcolor{myred}{(+28.0)}} & \makecell{\underline{88.0}\\ \textcolor{myred}{(+32.2)}} & \makecell{66.4\\ \textcolor{myred}{(+16.6)}} & \makecell{66.4\\ \textcolor{myred}{(+16.6)}} & \makecell{81.2\\ \textcolor{myred}{(+20.0)}} & \makecell{\textbf{82.8}\\ \textcolor{myred}{(+8.4)}} & \makecell{\textbf{67.4}\\ \textcolor{myred}{(+7.8)}} \\
    + Rewarded & \textbf{89.5} & \underline{87.2} & 73.9 & \textbf{87.8} & \textbf{74.4} & \textbf{74.4} & \textbf{91.9} & \underline{74.6} & \underline{74.6} & \underline{82.5} & 81.9 & 62.5 \\
    + Unrewarded $\rightarrow$ Rewarded & \makecell{\underline{88.2}\\ \textcolor{mygreen}{(-1.3)}} & \makecell{\textbf{87.6}\\ \textcolor{myred}{(+0.4)}} & \makecell{\textbf{87.1}\\ \textcolor{myred}{(+13.2)}} & \makecell{\underline{80.6}\\ \textcolor{mygreen}{(-7.2)}} & \makecell{\underline{73.7}\\ \textcolor{mygreen}{(-0.7)}} & \makecell{\underline{73.4}\\ \textcolor{mygreen}{(-1.0)}} & \makecell{87.6 \\ \textcolor{mygreen}{(-4.3)}} & \makecell{\textbf{78.8}\\ \textcolor{myred}{(+4.2)}} & \makecell{\textbf{78.8}\\ \textcolor{myred}{(+4.2)}} & \makecell{\textbf{85.5}\\ \textcolor{myred}{(+3.0)}} & \makecell{\underline{82.6}\\ \textcolor{myred}{(+0.7)}} & \makecell{\underline{66.9}\\ \textcolor{myred}{(+4.4)}}\\
    \midrule
    \multicolumn{14}{c}{\textit{Qwen 7B Models}} \\
    \midrule
    Qwen2.5-VL-7B-Instruct & 86.5 & 84.4 & \textbf{84.2} & 80.5 & 70.7 & 70.7 & 81.3 & 78.8 & 78.8 & 83.1 & \textbf{87.3} & \underline{72.5} \\
    + Unrewarded & \makecell{86.0\\ \textcolor{mygreen}{(-0.5)}} & \makecell{82.4\\ \textcolor{mygreen}{(-2.0)}} & \makecell{82.3\\ \textcolor{mygreen}{(-1.9)}} & \makecell{81.4\\ \textcolor{myred}{(+0.9)}} & \makecell{69.6\\ \textcolor{mygreen}{(-1.1)}} & \makecell{69.6\\ \textcolor{mygreen}{(-1.1)}} & \makecell{\underline{89.2}\\ \textcolor{myred}{(+7.9)}} & \makecell{82.4\\ \textcolor{myred}{(+3.6)}} & \makecell{82.4\\ \textcolor{myred}{(+3.6)}} & \makecell{78.6\\ \textcolor{mygreen}{(-4.5)}} & \makecell{\underline{87.0}\\ \textcolor{mygreen}{(-0.3)}} & \makecell{\textbf{72.7}\\ \textcolor{myred}{(+0.2)}} \\
    + Rewarded & \textbf{90.8} & \textbf{88.3} & 76.1 & \textbf{91.3} & \underline{77.0} & \underline{76.6} & \textbf{92.4} & \underline{82.5} & \underline{82.5} & \textbf{84.9} & 84.0 & 64.0 \\
    + Unrewarded $\rightarrow$ Rewarded & \textbf{90.8} & \makecell{\underline{86.7}\\ \textcolor{mygreen}{(-1.6)}} & \makecell{\underline{83.0}\\ \textcolor{myred}{(+6.9)}} & \makecell{\underline{85.3}\\ \textcolor{mygreen}{(-6.0)}} & \makecell{\textbf{77.2}\\ \textcolor{myred}{(+0.2)}} & \makecell{\textbf{76.8}\\ \textcolor{myred}{(+0.2)}} & \textbf{92.4} & \makecell{\textbf{83.6}\\ \textcolor{myred}{(+1.1)}} & \makecell{\textbf{83.6}\\ \textcolor{myred}{(+1.1)}} & \makecell{\underline{84.1}\\ \textcolor{mygreen}{(-0.8)}} & \makecell{82.3 \\ \textcolor{mygreen}{(-1.7)}} & \makecell{67.5 \\ \textcolor{myred}{(+3.5)}}\\
    \bottomrule
    \end{tabular}
    \end{adjustbox}
    \label{tab:5}
\end{table}
\subsubsection{GUI agent task}\label{sec4.4.2}
From the experimental results (Table~\ref{tab:3} and Table~\ref{tab:4} for GUI grounding tasks, Table~\ref{tab:5} for GUI low-level tasks and Table~\ref{tab:6} for GUI high-level tasks), we can observe that for GUI agent tasks requiring interaction with the environment and decision-making, LLMs demonstrate improved performance on benchmarks during unrewarded exploration phase.
This improvement is more pronounced than in mathematical reasoning tasks.
Furthermore, experimental results reveal that smaller model (i.e., Qwen2.5-VL-3B-Instruct) exhibits superior learning outcomes in unrewarded exploration compared to larger model (i.e., Qwen2.5-VL-7B-Instruct).
We have attempted to train LLaMA series vision language models (e.g. Llama-3.2-11B-Vision-Instruct); however, their built-in safety mechanisms caused them to systematically reject GUI agent instructions, preventing successful training, and therefore experiments on these models are not reported here.
The learning performance of LLMs on GUI tasks during the unrewarded exploration phase mirrors Tolman's rats experiments~\citep{tolman1948cognitive}: learning occurs even without rewards.
Upon introducing rewards, LLMs' performance continues to improve under rewards guidance, even slightly outperforming models trained exclusively with rewards, which also consistent with Tolman's rats studies.

Interestingly, we find that Qwen2.5-VL-3B-Instruct exhibits much larger gains during the unrewarded exploration stage than Qwen2.5-VL-7B-Instruct (e.g., +29.0 vs +2.0 in Table~\ref{tab:6}).
This pattern is consistent with observations in reinforcement learning at scale: weak models tend to be more sensitive to optimization signals, because their initial policies are less stable and more easily altered by exploratory updates.
In contrast, larger models begin with stronger priors and more coherent policies, which makes unguided exploration induce comparatively smaller behavioral changes.
In contrast to the 3B model, Qwen2.5-VL-7B-Instruct performs worse after both unrewarded exploration and reinforcement learning with rewards throughout than the base model on some benchmarks (e.g., AndroidControl-High in Table~\ref{tab:6}).
A plausible explanation is that the Qwen2.5-VL-7B-Instruct model already starts with a relatively strong, well-calibrated policy on these benchmarks, so both the unrewarded exploration and rewarded training can act as noisy or misaligned optimization signals.
\begin{table}[!t]
    \centering
    \setlength{\tabcolsep}{3pt}
    \caption{Model performance on high-level GUI tasks.
    Rewarded: training with rewards.
    Unrewarded: exploration without rewards.
    Type: the exact match score between the predicted action types and the ground truth.
    GR: grounding performance.
    SR: step-wise success rate.
    We show the relative difference at the \textit{+ Unrewarded} row compared to the base model and the \textit{+ Unrewarded $\rightarrow$ Rewarded} row compared to the \textit{+ Rewarded} row.}
    \begin{adjustbox}{width=0.5\textwidth, center} 
    \begin{tabular}{lccccccc}
    \toprule
    \multirow{2}{*}{\textbf{Model}} & 
    \multicolumn{3}{c}{\textbf{AndroidControl-High}} & 
    \multicolumn{3}{c}{\textbf{GUI-Odyssey}} \\
    \cmidrule(lr){2-4} \cmidrule(lr){5-7} & Type & GR & SR & Type & GR & SR & \\
    \midrule
    \multicolumn{8}{c}{\textit{Qwen 3B Models}} \\
    \midrule
    Qwen2.5-VL-3B-Instruct & 48.8 & 45.9 & 38.6 & 38.3 & 27.2 & 27.3 \\
    + Unrewarded & \makecell{\underline{64.1}\\ \textcolor{myred}{(+15.3)}} & \makecell{52.7\\ \textcolor{myred}{(+6.8)}} & \makecell{\underline{49.1}\\ \textcolor{myred}{(+10.5)}} & \makecell{\underline{67.3}\\ \textcolor{myred}{(+29.0)}} & \makecell{34.9\\ \textcolor{myred}{(+7.7)}} & \makecell{31.4\\ \textcolor{myred}{(+4.1)}} \\
    + Rewarded & 63.6 & \underline{59.4} & 45.4 & 54.7 & \underline{41.0} & \underline{34.8} \\
    + Unrewarded $\rightarrow$ Rewarded & \makecell{\textbf{69.5}\\ \textcolor{myred}{(+5.1)}} & \makecell{\textbf{60.0}\\ \textcolor{myred}{(+0.6)}} & \makecell{\textbf{50.6}\\ \textcolor{myred}{(+6.2)}} & \makecell{\textbf{68.5}\\ \textcolor{myred}{(+13.8)}} & \makecell{\textbf{45.4}\\ \textcolor{myred}{(+4.4)}} & \makecell{\textbf{41.4}\\ \textcolor{myred}{(+6.6)}}\\
    \midrule
    \multicolumn{8}{c}{\textit{Qwen 7B Models}} \\
    \midrule
    Qwen2.5-VL-7B-Instruct &  68.5 & 62.2 & \textbf{56.9} & 55.7 & 37.8 & 34.5 \\
    + Unrewarded & \makecell{67.1\\ \textcolor{mygreen}{(-1.4)}} & \makecell{59.3\\ \textcolor{mygreen}{(-2.9)}} & \makecell{\underline{56.2}\\ \textcolor{mygreen}{(-0.7)}} & \makecell{57.7\\ \textcolor{myred}{(+2.0)}} & \makecell{37.3\\ \textcolor{mygreen}{(-0.5)}} & \makecell{33.4\\ \textcolor{mygreen}{(-1.1)}} \\
    + Rewarded & \underline{71.8} & \textbf{65.5} & 50.9 & \underline{65.0} & \underline{43.3} & \underline{38.3} \\
    + Unrewarded $\rightarrow$ Rewarded & \makecell{\textbf{72.4}\\ \textcolor{myred}{(+1.4)}} & \makecell{\underline{62.7}\\ \textcolor{mygreen}{(-2.8)}} & \makecell{51.0 \\ \textcolor{myred}{(+0.1)}} & \makecell{\textbf{68.1}\\ \textcolor{myred}{(+3.1)}} & \makecell{\textbf{45.9}\\ \textcolor{myred}{(+2.6)}} & \makecell{\textbf{41.4}\\ \textcolor{myred}{(+3.1)}} \\
    \bottomrule
    \end{tabular}
    \end{adjustbox}
    \label{tab:6}
\end{table}
\section{Discussion}\label{sec5}
Our study began by investigating whether Tolman's theory of latent learning in rats also hold true for LLMs.
To investigate this question, we conducted experiments on mathematical reasoning tasks and GUI agent tasks using mainstream LLMs (such as Qwen and the Llama series).
Across both types of tasks, the results indicate that LLMs can improve their performance (i.e., acquire certain knowledge) even without rewards.
Furthermore, introducing rewards after unrewarded exploration yields slightly better model improvements than training with rewards throughout the entire process.
Taken together, our findings confirm that the latent learning phenomenon observed by Blodgett and Tolman~\citep{tolman1948cognitive, blodgett1929effect} in rats studies also manifests in LLMs.
Beyond these empirical observations, we also provide theoretical analyses for our experiments focusing on the unrewarded exploration phase, which prove that LLMs could learn without rewards.
This positions our work as an experimental step toward unifying cognitive psychology and machine intelligence under a shared framework of learning without reinforcement.

A primary limitation of our study is that both the empirical experiments and theoretical analysis are grounded in the GRPO~\citep{shao2024deepseekmath} algorithm.
While this choice provides a controlled and interpretable setting for examining latent learning dynamics, it remains unclear whether the same effects would persist under different RL algorithms.
Future work could extend our findings to other policy optimization methods, such as PPO~\citep{schulman2017proximal} or its variants, to test whether unrewarded exploration exhibits similar benefits across distinct training paradigms.
Such investigations would help determine whether latent learning in LLMs reflects a general property of reinforcement fine-tuning or is specific to the dynamics of GRPO.

Another limitation concerns the scope of models examined and the range of experimental tasks.
Our experiments focused on the Qwen and Llama series with parameter scales up to 8B.
While these models provide a representative testbed for controlled comparisons, they nonetheless may not fully capture the learning dynamics of other series models or larger models that possess more extensive pretraining corpora and emergent reasoning abilities.
Therefore, future work could extend these analyses to broader models series and larger models size to examine whether latent learning phenomena persist or intensify with increasing scale and diversity of model design.
We restricted our evaluation to mathematical reasoning and GUI agent environments, which represent structured yet limited domains of cognition.
Although these tasks are well suited for quantifying performance changes under rewarded and unrewarded conditions, they may not encompass the full spectrum of behaviors through which latent learning manifests.
Consequently, future studies could explore additional settings such as open-ended dialogue, embodied control, and code generation to test whether unrewarded exploration generalizes across broader cognitive domains.

Our work suggests that learning in LLMs may share deeper commonalities with biological cognition than previously recognized~\citep{lake2017building, Hassabis2017}.
Moreover, the emergence of latent learning in artificial systems indicates that the ability to acquire and organize knowledge without rewards may be a general property of intelligent agents~\citep{schmidhuber1991possibility, pathak2017curiosity, oudeyer2007intrinsic}, whether natural or artificial.
Taken together, this insight invites a reframing of how we conceptualize ``learning" in artificial intelligence, from a process driven purely by reinforcement or supervision to one that also involves unrewarded exploration and internal representation building.
Beyond theoretical implications, such parallels between animal and artificial learning could foster more integrated frameworks linking cognitive science, neuroscience, and machine learning, ultimately contributing to a unified understanding of intelligence~\citep{Botvinick2019, silver2021reward}.
Finally, by highlighting the existence of latent learning in LLMs, we believe that this work points towards artificial agents that can engage in more open-ended inquiry, a direction that may ultimately benefit scientific discovery, education, and human–AI collaboration, provided such systems are developed with appropriate safeguards.

\section{Conclusion}\label{sec6}
In this paper, we investigate whether Tolman's latent learning phenomenon exists in modern machine learning systems (i.e., LLMs). 
We conduct extensive experiments across mathematical reasoning and GUI agent tasks on different model series (i.e., Qwen and LLaMA), demonstrating that this phenomenon does indeed occur in LLMs: LLMs achieve measurable performance gains when learning without rewards, and continue to achieve higher performance gains when learning with rewards introduced later than when learning with rewards throughout the entire process. 
Beyond our empirical findings, we also provide theoretical analyses of our experimental setup to explain why LLMs achieve performance gains when learning without rewards.

By connecting a classic construct from cognitive psychology to the training dynamics of contemporary LLMs, our work suggests a potential research direction for future studies to investigate how such a two-stage training paradigm beginning with an unrewarded exploration phase may improve LLMs' performance, especially at much larger scales.

\bibliography{main}
\bibliographystyle{icml2026}

\newpage
\appendix
\onecolumn
\section{Proof of Theorems}\label{appendix:proof}
\textbf{Theorem 1} (Performance improvement without rewards in sparse setting)
\textit{Consider an autoregressive language model policy $\pi_\theta$ that defines, for each input prompt $x$ and token prefix $y_{<t}$, a conditional distribution\(\pi_\theta(y_t \mid x,y_{<t})\) over the next token $y_t$. The model is trained with Group Relative Policy Optimization where the per-token advantage is fixed to $A_{i,t}\equiv 1$, ratio clipping is applied with parameter $\varepsilon>0$, and a KL trust-region penalty of strength $\beta>0$ is included. Let the reference policy be $\pi_{\text{ref}}(y_t \mid x,y_{<t})$, and let the proposal distribution be $\pi_{\text{old}}(y_t \mid x,y_{<t})$. Assume that the likelihood ratio \(h(x,y_{<t},y_t) \;=\; \frac{\pi_{\text{old}}(y_t \mid x,y_{<t})}{\pi_{\text{ref}}(y_t \mid x,y_{<t})}\) is nondecreasing in the latent utility $u^\star(x,y_{<t},y_t)$ for every input $x$, every prefix $y_{<t}$, and every token $y_t$.}

\textit{Then the GRPO update produces an updated conditional policy of the closed form}
\begin{equation}
    \pi^\star_\theta(y_t \mid x,y_{<t}) \;=\; \min\!\Big((1+\varepsilon)\,\pi_{\text{old}}(y_t \mid x,y_{<t}),\ \ \tau(x,y_{<t})\,\pi_{\text{ref}}(y_t \mid x,y_{<t})\Big),
\end{equation}
\textit{where $\tau(x,y_{<t})>0$ is the unique normalizing constant satisfying \(
\sum_{y_t \in \mathcal V} \pi^\star_\theta(y_t \mid x,y_{<t}) = 1.
\) Moreover, under the monotone-likelihood-ratio condition, the expected performance}
\begin{equation}
    J(\theta) \;=\; \mathbb E_{x \sim \mathcal D}\,\mathbb E_{y \sim \pi_\theta(\cdot \mid x)} \Bigg[\sum_{t=1}^{|y|} u^\star(x,y_{<t},y_t)\Bigg]
\end{equation}

\textit{satisfies \(
J(\pi^\star_\theta) \;\ge\; J(\pi_{\text{ref}}).
\) Thus even with no scalar reward signal and with constant advantage, ratio-clipped GRPO with KL regularization yields monotone improvement in the expected utility.}

\begin{proof}
The model is a conditional distribution $\pi_\theta(y_t \mid x,y_{<t})$ over tokens in an output sequence $y=(y_1,\ldots,y_T)$ given an input $x$. The autoregressive factorization and log-likelihood decomposition are
\begin{equation}\label{eq.1}
\pi_\theta(y\mid x)=\prod_{t=1}^{T}\pi_\theta(y_t\mid x,y_{<t}),
\qquad 
\log \pi_\theta(y\mid x)=\sum_{t=1}^{T}\log \pi_\theta(y_t\mid x,y_{<t}).
\end{equation}
For each $x$ we sample a group $G=\{y^{(1)},\dots,y^{(K)}\}$ of complete outputs from a fixed proposal policy $\pi_{\mathrm{prop}}(y\mid x)$ (for example a frozen copy of $\pi_{\text{old}}(y\mid x)$). In the setting of interest, the advantage is set identically to one at every token. Keeping ratio clipping and a direct KL regularizer, the GRPO training objective (ignoring length-normalization constants that do not affect the derivations) is
\begin{equation}\label{eq.2}
\begin{split}
    \mathcal L_{\text{A=1}}(\theta)
&=\mathbb{E}_{x\sim \mathcal D,\ G\sim \pi_{\mathrm{prop}}^K}\Bigg[
\sum_{i=1}^{K}\sum_{t=1}^{|y^{(i)}|}
\min\!\Big(\rho_{i,t}(\theta),\ \operatorname{clip}\big(\rho_{i,t}(\theta);\,1-\varepsilon,\,1+\varepsilon\big)\Big)\Bigg]
\;\\&-\;\beta\,\mathbb{E}_{x,t}\Big[\mathrm{KL}\big(\pi_\theta(\cdot\mid x,y_{<t})\ \|\ \pi_{\text{ref}}(\cdot\mid x,y_{<t})\big)\Big],
\end{split}
\end{equation}
where the per-token importance ratio and its log-increment are
\begin{equation}
\begin{split}
&\rho_{i,t}(\theta)=
\frac{\pi_\theta(y^{(i)}_t\mid x,y^{(i)}_{<t})}{\pi_{\mathrm{prop}}(y^{(i)}_t\mid x,y^{(i)}_{<t})}
=\exp\!\Big\{\Delta_{i,t}(\theta)\Big\},\\&
\qquad 
\Delta_{i,t}(\theta)=\log\pi_\theta(y^{(i)}_t\mid x,y^{(i)}_{<t})-\log\pi_{\mathrm{prop}}(y^{(i)}_t\mid x,y^{(i)}_{<t}).
\end{split}
\end{equation}
The clipping operator is the standard two-sided clamp
\begin{equation}
\operatorname{clip}(\rho;1-\varepsilon,1+\varepsilon)=
\min\big\{\max(\rho,1-\varepsilon),\,1+\varepsilon\big\}.
\end{equation}
With $A\equiv 1$, the inner minimum becomes $\min(\rho_{i,t}(\theta),\,1+\varepsilon)$ at every token. This enables a per-state rewriting. For any fixed token state $(x,y_{<t})$ with discrete vocabulary $\mathcal V$ and any constant $c>0$,
{
\footnotesize
\begin{equation}
    \sum_{y_t\in\mathcal V} \pi_{\mathrm{prop}}(y_t\mid x,y_{<t}) \,\min\!\Big(\frac{\pi_\theta(y_t\mid x,y_{<t})}{\pi_{\mathrm{prop}}(y_t\mid x,y_{<t})},\,c\Big)
=\sum_{y_t\in\mathcal V}\min\!\Big(\pi_\theta(y_t\mid x,y_{<t}),\,c\,\pi_{\mathrm{prop}}(y_t\mid x,y_{<t})\Big).
\end{equation}}
Setting $c=1+\varepsilon$ and writing the KL penalty at the same state yields, for each $(x,y_{<t})$, the concave per-state objective
\begin{equation}
\begin{split}
    \ell_{x,y_{<t}}(\pi)
&=\sum_{y_t\in\mathcal V} \min\!\big(\pi(y_t\mid x,y_{<t}),\,(1+\varepsilon)\,\pi_{\mathrm{prop}}(y_t\mid x,y_{<t})\big)\ \\&
-\ \beta\sum_{y_t\in\mathcal V}\pi(y_t\mid x,y_{<t})\log\frac{\pi(y_t\mid x,y_{<t})}{\pi_{\text{ref}}(y_t\mid x,y_{<t})},
\end{split}
\end{equation}
to be maximized over the simplex $\{\pi(\cdot\mid x,y_{<t})\in\Delta^{|\mathcal V|-1}\}$. The map $\pi\mapsto \sum_{y_t}\min(\pi,\text{const})$ is concave coordinatewise and $-\beta\,\mathrm{KL}(\pi\|\pi_{\text{ref}})$ is concave in its first argument, so $\ell_{x,y_{<t}}$ is concave.

Introduce a Lagrange multiplier $\lambda_{x,y_{<t}}$ for the constraint $\sum_{y_t}\pi(y_t\mid x,y_{<t})=1$ and nonnegativity multipliers $\mu_{x,y_{<t}}(y_t)\ge 0$. The subgradient of the capped term is
{
\tiny
\begin{equation}
\partial_{\pi(y_t\mid x,y_{<t})}\,\min\!\big(\pi(y_t\mid x,y_{<t}),(1+\varepsilon)\pi_{\mathrm{prop}}(y_t\mid x,y_{<t})\big)=
\begin{cases}
\{1\},& \pi(y_t\mid x,y_{<t})<(1+\varepsilon)\pi_{\mathrm{prop}}(y_t\mid x,y_{<t}),\\
[0,1],& \pi(y_t\mid x,y_{<t})=(1+\varepsilon)\pi_{\mathrm{prop}}(y_t\mid x,y_{<t}),\\
\{0\},& \pi(y_t\mid x,y_{<t})>(1+\varepsilon)\pi_{\mathrm{prop}}(y_t\mid x,y_{<t}).
\end{cases}
\end{equation}}
Hence the KKT stationarity condition states that there exists $\xi_{x,y_{<t}}(y_t)\in[0,1]$ with
\begin{equation}\label{eq.8}
\begin{split} 
\xi_{x,y_{<t}}(y_t)\cdot \mathbf 1\!\left\{\pi(y_t\mid x,y_{<t})\right. & \left.\le (1+\varepsilon)\pi_{\mathrm{prop}}(y_t\mid x,y_{<t})\right\}
\;+\;0\cdot \mathbf 1\!\left\{\pi(y_t\mid x,y_{<t})\right. \\&
\left.>(1+\varepsilon)\pi_{\mathrm{prop}}(y_t\mid x,y_{<t})\right\}\\&
\;-\;\beta\Big(\log \tfrac{\pi(y_t\mid x,y_{<t})}{\pi_{\text{ref}}(y_t\mid x,y_{<t})}+1\Big)\;-\;\lambda_{x,y_{<t}}\;+\;\mu_{x,y_{<t}}(y_t)=0,
\end{split}
\end{equation}
with complementary slackness $\mu_{x,y_{<t}}(y_t)\,\pi(y_t\mid x,y_{<t})=0$. Whenever $\pi(y_t\mid x,y_{<t})>0$ we have $\mu_{x,y_{<t}}(y_t)=0$. In the interior region where the cap is inactive,
\begin{equation}
    \pi(y_t\mid x,y_{<t})<(1+\varepsilon)\pi_{\mathrm{prop}}(y_t\mid x,y_{<t}),
\end{equation}
the subgradient equals \eqref{eq.1} and \eqref{eq.8} gives
\begin{equation}\label{eq.10}
1-\beta\Big(\log \tfrac{\pi(y_t\mid x,y_{<t})}{\pi_{\text{ref}}(y_t\mid x,y_{<t})}+1\Big)-\lambda_{x,y_{<t}}=0
\quad\Rightarrow\quad
\pi(y_t\mid x,y_{<t})=\tau_{x,y_{<t}}\,\pi_{\text{ref}}(y_t\mid x,y_{<t}),
\end{equation}
where $\tau_{x,y_{<t}}\equiv \exp\!\big(\tfrac{1-\lambda_{x,y_{<t}}}{\beta}-1\big)$. Combining the interior formula with the active-cap condition $\pi(y_t\mid x,y_{<t})=(1+\varepsilon)\pi_{\mathrm{prop}}(y_t\mid x,y_{<t})$ yields the single closed-form expression for the maximizer at state $(x,y_{<t})$:
{
\footnotesize
\begin{equation}
\quad
\pi^\star_\theta(y_t\mid x,y_{<t})\;=\;\min\Big((1+\varepsilon)\,\pi_{\mathrm{prop}}(y_t\mid x,y_{<t}),\ \ \tau_{x,y_{<t}}\,\pi_{\text{ref}}(y_t\mid x,y_{<t})\Big), \sum_{y_t\in\mathcal V}\pi^\star_\theta(y_t\mid x,y_{<t})=1\ .
\quad
\end{equation}}
Strict concavity of $\ell_{x,y_{<t}}$ in the density ratios $\log\frac{\pi}{\pi_{\text{ref}}}$ (via the KL term) ensures the uniqueness of $\tau_{x,y_{<t}}>0$.

It is convenient to partition the vocabulary at state $(x,y_{<t})$ according to whether the cap is active:
\begin{equation}
S_{x,y_{<t}}=\left\{y_t:\ (1+\varepsilon)\,\pi_{\mathrm{prop}}(y_t\mid x,y_{<t})\ \le\ \tau_{x,y_{<t}}\,\pi_{\text{ref}}(y_t\mid x,y_{<t})\right\},
\qquad 
T_{x,y_{<t}}=\mathcal V\setminus S_{x,y_{<t}}.
\end{equation}
By construction,
\begin{equation}\label{eq.11}
\pi^\star_\theta(y_t\mid x,y_{<t})=
\begin{cases}
(1+\varepsilon)\,\pi_{\mathrm{prop}}(y_t\mid x,y_{<t}),& y_t\in S_{x,y_{<t}},\\
\tau_{x,y_{<t}}\,\pi_{\text{ref}}(y_t\mid x,y_{<t}),& y_t\in T_{x,y_{<t}},
\end{cases}
\end{equation}
and summing the two pieces gives the exact mass-balance identity
\begin{equation}\label{eq.12}
\sum_{y_t\in S_{x,y_{<t}}}\Big((1+\varepsilon)\,\pi_{\mathrm{prop}}(y_t\mid x,y_{<t})-\pi_{\text{ref}}(y_t\mid x,y_{<t})\Big)
\;+\;
\sum_{y_t\in T_{x,y_{<t}}}\Big(\tau_{x,y_{<t}}-1\Big)\pi_{\text{ref}}(y_t\mid x,y_{<t})\;=\;0.
\end{equation}

We now relate the update \eqref{eq.10} to the latent utility. Let $u^\star(x,y_{<t},y_t)$ be the (unobserved) tokenwise utility. The expected performance is
\begin{equation}
J(\theta)=\mathbb E_{x\sim\mathcal D}\,\mathbb E_{y\sim \pi_\theta(\cdot\mid x)}\Big[\sum_{t=1}^{|y|}u^\star(x,y_{<t},y_t)\Big]
=\mathbb E_{x,y_{<t}}\sum_{y_t\in\mathcal V}\pi_\theta(y_t\mid x,y_{<t})\,u^\star(x,y_{<t},y_t).
\end{equation}
Assume the monotone-likelihood-ratio condition: for every $(x,y_{<t})$ and any tokens $y_t^{(1)},y_t^{(2)}$,
\begin{equation}
u^\star(x,y_{<t},y_t^{(1)})\ \ge\ u^\star(x,y_{<t},y_t^{(2)})\quad\Rightarrow\quad
\frac{\pi_{\mathrm{prop}}(y_t^{(1)}\mid x,y_{<t})}{\pi_{\text{ref}}(y_t^{(1)}\mid x,y_{<t})}
\ \ge\ 
\frac{\pi_{\mathrm{prop}}(y_t^{(2)}\mid x,y_{<t})}{\pi_{\text{ref}}(y_t^{(2)}\mid x,y_{<t})}.
\end{equation}
Define the likelihood ratio
\begin{equation}
    h(x,y_{<t},y_t)=\frac{\pi_{\mathrm{prop}}(y_t\mid x,y_{<t})}{\pi_{\text{ref}}(y_t\mid x,y_{<t})}
\end{equation}
and the tilt
\begin{equation}
    w_{x,y_{<t}}(y_t)=\frac{\pi^\star_\theta(y_t\mid x,y_{<t})}{\pi_{\text{ref}}(y_t\mid x,y_{<t})}
=\min\Big(\tau_{x,y_{<t}},\ (1+\varepsilon)\,h(x,y_{<t},y_t)\Big).
\end{equation}
Both $w_{x,y_{<t}}(y_t)$ and $u^\star(x,y_{<t},y_t)$ are nondecreasing functions of the common scalar $h(x,y_{<t},y_t)$. Chebyshev’s association inequality under the base measure $\pi_{\text{old}}(\cdot\mid x,y_{<t})$ therefore gives
\begin{equation}\label{eq.14}
\begin{split}
     \sum_{y_t\in\mathcal V}\pi^\star_\theta(y_t\mid x,y_{<t})\,u^\star(x,y_{<t},y_t)
= \sum_{y_t\in\mathcal V}\pi_{\text{old}}(y_t\mid x,y_{<t})\,w_{x,y_{<t}}(y_t)\,u^\star(x,y_{<t},y_t)
\ \\\ge\ \Big(\sum_{y_t}\pi_{\text{old}}\,w_{x,y_{<t}}\Big)\Big(\sum_{y_t}\pi_{\text{old}}\,u^\star\Big)
= \sum_{y_t\in\mathcal V}\pi_{\text{old}}(y_t\mid x,y_{<t})\,u^\star(x,y_{<t},y_t),
\end{split}
\end{equation}
because $\sum_{y_t}\pi_{\text{old}}(y_t\mid x,y_{<t})\,w_{x,y_{<t}}(y_t)=\sum_{y_t}\pi^\star_\theta(y_t\mid x,y_{<t})=1$. Averaging over $(x,y_{<t})$ yields the global monotone improvement
\begin{equation}
    J(\pi^\star_\theta)\ \ge\ J(\pi_{\text{old}}).
\end{equation}

The same conclusion can be expressed in the partition language. Using \eqref{eq.11}-\eqref{eq.12}, the per-state change in performance is
\begin{equation}\label{eq.15}
\begin{split}
    \Delta J_{x,y_{<t}}\ =\ \sum_{y_t\in \mathcal V}\Big(\pi^\star_\theta(y_t\mid x,y_{<t})-\pi_{\text{old}}(y_t\mid x,y_{<t})\Big)\,u^\star(x,y_{<t},y_t)
\\=\sum_{y_t\in S_{x,y_{<t}}}\Big((1+\varepsilon)\pi_{\mathrm{prop}}-\pi_{\text{ref}}\Big)u^\star
+\sum_{y_t\in T_{x,y_{<t}}}\Big(\tau_{x,y_{<t}}-1\Big)\pi_{\text{ref}}\,u^\star,
\end{split}
\end{equation}
where we have abbreviated the arguments to reduce clutter in the last line. Define the nonnegative transfer magnitude
\begin{equation}\label{eq.16}
\begin{split}
&M_{x,y_{<t}}\ \equiv\ \sum_{y_t\in T_{x,y_{<t}}}\Big(\tau_{x,y_{<t}}-1\Big)\pi_{\text{ref}}(y_t\mid x,y_{<t})
\ \\&=\ -\sum_{y_t\in S_{x,y_{<t}}}\Big((1+\varepsilon)\pi_{\mathrm{prop}}(y_t\mid x,y_{<t})-\pi_{\text{ref}}(y_t\mid x,y_{<t})\Big)\ \ \ge\ 0.
\end{split}
\end{equation}
Introduce the increment-weighted conditional means
\begin{equation}
    \bar u^{(+)}_{S_{x,y_{<t}}}\ =\ \frac{\sum_{y_t\in S_{x,y_{<t}}}\Big(\pi_{\text{ref}}(y_t\mid x,y_{<t})-(1+\varepsilon)\pi_{\mathrm{prop}}(y_t\mid x,y_{<t})\Big)\,u^\star(x,y_{<t},y_t)}{\sum_{y_t\in S_{x,y_{<t}}}\Big(\pi_{\text{ref}}(y_t\mid x,y_{<t})-(1+\varepsilon)\pi_{\mathrm{prop}}(y_t\mid x,y_{<t})\Big)},
\end{equation}
\begin{equation}
    \bar u^{(-)}_{T_{x,y_{<t}}}\ =\ \frac{\sum_{y_t\in T_{x,y_{<t}}}\Big(\tau_{x,y_{<t}}-1\Big)\pi_{\text{ref}}(y_t\mid x,y_{<t})\,u^\star(x,y_{<t},y_t)}{\sum_{y_t\in T_{x,y_{<t}}}\Big(\tau_{x,y_{<t}}-1\Big)\pi_{\text{ref}}(y_t\mid x,y_{<t})}.
\end{equation}

Then \eqref{eq.15} can be rewritten as
\begin{equation}\label{eq.17}
\Delta J_{x,y_{<t}}\ =\ -\,M_{x,y_{<t}}\big(\bar u^{(+)}_{S_{x,y_{<t}}}-\bar u^{(-)}_{T_{x,y_{<t}}}\big).
\end{equation}
On $S_{x,y_{<t}}$ the defining inequality \((1+\varepsilon)\pi_{\mathrm{prop}}\le \tau_{x,y_{<t}}\pi_{\text{ref}}\) is equivalent to \(h(x,y_{<t},y_t)\le \tau_{x,y_{<t}}/(1+\varepsilon)\), while the reverse holds on $T_{x,y_{<t}}$. Under the monotone-likelihood-ratio condition \eqref{eq.14}, the conditional mean of $u^\star$ on $T_{x,y_{<t}}$ is at least as large as on $S_{x,y_{<t}}$, which implies $\bar u^{(-)}_{T_{x,y_{<t}}}\ge \bar u^{(+)}_{S_{x,y_{<t}}}$. Plugging this into \eqref{eq.17} gives $\Delta J_{x,y_{<t}}\ge 0$. Averaging once more over $(x,y_{<t})$ recovers $J(\pi^\star_\theta)\ge J(\pi_{\text{ref}})$.

Finally, the tokenwise KL penalty admits an unbiased single-sample estimator using $\psi(r)=r-\log r-1$. For any $(x,y_{<t})$,
\begin{equation}\label{eq.18}
\mathbb E_{y_t\sim \pi_\theta(\cdot\mid x,y_{<t})}\left[\psi\!\Big(\frac{\pi_{\text{ref}}(y_t\mid x,y_{<t})}{\pi_\theta(y_t\mid x,y_{<t})}\Big)\right]
=\mathrm{KL}\big(\pi_\theta(\cdot\mid x,y_{<t})\ \|\ \pi_{\text{ref}}(\cdot\mid x,y_{<t})\big),
\end{equation}
and $\psi(r)\ge 0$ for all $r>0$ since $\psi''(r)=1/r>0$ and $\psi(1)=0$. This validates the KL term used in \eqref{eq.2}. In small-step stochastic ascent, if $\theta^+=\theta+\eta \hat g$ with $\mathbb E[\hat g]=\nabla_\theta \mathcal L_{\text{A=1}}(\theta)$ and bounded second moment, standard smoothness arguments imply an expected increase in the surrogate. Translating to performance via the association argument above yields the first-order estimate
{
\footnotesize
\begin{equation}\label{eq.19}
\mathbb{E}\big[J(\theta^{+})-J(\theta)\big]\ \approx\ \eta\ \mathbb{E}_{x,y_{<t}}\Big[\,\mathrm{Cov}_{y_t\sim \pi_{\text{ref}}(\cdot\mid x,y_{<t})}\big(\min(\tau_{x,y_{<t}},(1+\varepsilon)h(x,y_{<t},y_t)),\,u^\star(x,y_{<t},y_t)\big)\,\Big]\ \ge\ 0,
\end{equation}
}
because both arguments of the covariance are nondecreasing in $h(x,y_{<t},y_t)$.

In summary, with advantage fixed to one and no scalar rewards, the GRPO objective with ratio clipping and a KL trust region reduces at each token state $(x,y_{<t})$ to maximizing the overlap $\sum_{y_t}\min(\pi(y_t\mid x,y_{<t}),(1+\varepsilon)\pi_{\mathrm{prop}}(y_t\mid x,y_{<t}))$ minus a KL term to $\pi_{\text{ref}}(\cdot\mid x,y_{<t})$. The unique maximizer is the water-filling solution \eqref{eq.10}. Under the monotone-likelihood-ratio condition \eqref{eq.14}, this update increases the expected latent utility, hence $J(\pi^\star_\theta)\ge J(\pi_{\text{ref}})$, completing the proof.
\end{proof}

\textbf{Theorem 2} (Performance improvement without rewards in continuous setting)
\textit{Fix an input distribution $\mathcal D$, an autoregressive language model policy $\pi_\theta(y_t\mid x,y_{<t})$ that admits a density with respect to a common $\sigma$-finite base measure on the token space, a reference policy $\pi_{\text{ref}}(y_t\mid x,y_{<t})$ with the same property, and a proposal (teacher) policy $\pi_{\mathrm{prop}}(y_t\mid x,y_{<t})$. Train with Group Relative Policy Optimization in which the per-token advantage is fixed to $A\equiv 1$, the ratio is clipped with parameter $\varepsilon>0$ so that the per-token factor becomes $\min(\rho,1+\varepsilon)$, and a KL trust-region penalty of strength $\beta>0$ is included. Assume that for every token state $(x,y_{<t})$ the likelihood ratio $h(x,y_{<t},y_t)=\pi_{\mathrm{prop}}(y_t\mid x,y_{<t})/\pi_{\text{ref}}(y_t\mid x,y_{<t})$ is comonotone with the latent token utility $u^\star(x,y_{<t},y_t)$ under the base distribution $\pi_{\text{ref}}(\cdot\mid x,y_{<t})$, meaning that $(u^\star(\cdot)-u^\star(\cdot'))(h(\cdot)-h(\cdot'))\ge 0$ almost everywhere. Then the statewise maximizer of the GRPO surrogate has the water-filling form}
\begin{gather*}
    \pi^\star_\theta(y_t\mid x,y_{<t})=\min\big((1+\varepsilon)\,\pi_{\mathrm{prop}}(y_t\mid x,y_{<t}),\ \tau_{x,y_{<t}}\,\pi_{\text{ref}}(y_t\mid x,y_{<t})\big),\\    
    \int \pi^\star_\theta(y_t\mid x,y_{<t})\,\mathrm d\nu(y_t)=1,
\end{gather*}

\textit{for a unique normalizing constant $\tau_{x,y_{<t}}>0$. Moreover, the expected performance}
\[
J(\theta)=\mathbb E_{x\sim\mathcal D}\,\mathbb E_{y\sim \pi_\theta(\cdot\mid x)}\Big[\sum_{t=1}^{|y|}u^\star(x,y_{<t},y_t)\Big]
=\mathbb E_{x,y_{<t}}\int \pi_\theta(y_t\mid x,y_{<t})\,u^\star(x,y_{<t},y_t)\,\mathrm d\nu(y_t)
\]
\textit{satisfies $J(\pi^\star_\theta)\ge J(\pi_{\text{ref}})$. Hence, even without scalar rewards and with constant advantage, ratio-clipped GRPO with KL regularization yields monotone improvement in expected latent utility in continuous output spaces.}
\begin{proof}
The proof proceeds in the continuous measure-theoretic setting by replacing sums with integrals and by making all densities explicit with respect to a common $\sigma$-finite base measure. Fix an input $x$ and a token state $(x,y_{<t})$. Let $(\mathcal Y,\mathcal F,\nu)$ be a measurable token space with a $\sigma$-finite base measure $\nu$ such that the conditional distributions $\pi_\theta(\cdot\mid x,y_{<t})$, $\pi_{\text{ref}}(\cdot\mid x,y_{<t})$, and $\pi_{\mathrm{prop}}(\cdot\mid x,y_{<t})$ admit densities with respect to $\nu$. The autoregressive factorization and the log-likelihood decomposition remain unchanged,
\begin{equation}\label{t.1}
\pi_\theta(y\mid x)=\prod_{t=1}^T \pi_\theta(y_t\mid x,y_{<t}),\qquad
\log\pi_\theta(y\mid x)=\sum_{t=1}^T \log \pi_\theta(y_t\mid x,y_{<t}). 
\end{equation}
At each $(x,y_{<t})$, complete sequences are proposed from $\pi_{\mathrm{prop}}(\cdot\mid x,y_{<t})$. Because the per-token advantage is set to $A\equiv 1$, the ratio-clipped token factor becomes $\min(\rho,1+\varepsilon)$ where
\begin{equation}\label{t.2}
\begin{split}
    \rho_{i,t}(\theta)=\frac{\pi_\theta(y^{(i)}_t\mid x,y^{(i)}_{<t})}{\pi_{\mathrm{prop}}(y^{(i)}_t\mid x,y^{(i)}_{<t})}
=\exp\{\Delta_{i,t}(\theta)\},\\
\Delta_{i,t}(\theta)=\log\pi_\theta(y^{(i)}_t\mid x,y^{(i)}_{<t})-\log\pi_{\mathrm{prop}}(y^{(i)}_t\mid x,y^{(i)}_{<t}), 
\end{split}
\end{equation}

and the two-sided clip is $\operatorname{clip}(\rho;1-\varepsilon,1+\varepsilon)=\min\{\max(\rho,1-\varepsilon),\,1+\varepsilon\}$. Since $\min(\rho,\operatorname{clip}(\rho))=\min(\rho,1+\varepsilon)$ holds identically, expectations over sampled tokens can be rewritten as integrals over densities. For any constant $c>0$ one has the identity
\begin{equation}\label{t.3}
\begin{split}
    \int_{\mathcal Y} \pi_{\mathrm{prop}}(y_t\mid x,y_{<t})\,\min\!\Big(\frac{\pi_\theta(y_t\mid x,y_{<t})}{\pi_{\mathrm{prop}}(y_t\mid x,y_{<t})},\,c\Big)\, \mathrm d\nu(y_t)
=\int_{\mathcal Y} \min\!\Big(\pi_\theta(y_t\mid x,y_{<t}),\,c\,\pi_{\mathrm{prop}}(y_t\mid x,y_{<t})\Big)\, \mathrm d\nu(y_t). 
\end{split}
\end{equation}
Setting $c=1+\varepsilon$ and writing the KL penalty at the same token state as a functional of the density with respect to $\nu$ gives, for each $(x,y_{<t})$, the continuous surrogate
\begin{equation}\label{t.4}
\scalebox{0.93}{$
\begin{aligned}
\ell_{x,y_{<t}}(\pi)=
\int_{\mathcal Y} \min\!\Big(\pi(y_t\mid x,y_{<t}),\,(1+\varepsilon)\,\pi_{\mathrm{prop}}(y_t\mid x,y_{<t})\Big)\,\mathrm d\nu(y_t)
- \beta \int_{\mathcal Y} \pi(y_t\mid x,y_{<t}) \log\frac{\pi(y_t\mid x,y_{<t})}{\pi_{\text{ref}}(y_t\mid x,y_{<t})}\, \mathrm d\nu(y_t),
\end{aligned}
$}
\end{equation}
to be maximized over the probability simplex of densities at $(x,y_{<t})$,
\begin{equation}\label{t.5}
\pi(\cdot\mid x,y_{<t})\ge 0\ \text{almost everywhere},\qquad 
\int_{\mathcal Y} \pi(y_t\mid x,y_{<t})\,\mathrm d\nu(y_t)=1.
\end{equation}
The map $a\mapsto \min(a,\alpha)$ is concave for every $\alpha>0$, integrals preserve concavity, and $-\beta\,\mathrm{KL}(\pi\|\pi_{\text{ref}})$ is strictly concave in its first argument. Therefore $\ell_{x,y_{<t}}$ is a strictly concave functional on the convex set \eqref{t.5}.

To derive the closed-form solution, introduce the Lagrangian
\begin{equation}\label{t.6}
\begin{split}
    \mathcal{L}(\pi,\lambda_{x,y_{<t}},\mu_{x,y_{<t}})=\ell_{x,y_{<t}}(\pi)
-\lambda_{x,y_{<t}}\!\left(\int_{\mathcal Y} \pi(y_t\mid x,y_{<t})\,\mathrm d\nu(y_t)-1\right)
-\int_{\mathcal Y}\mu_{x,y_{<t}}(y_t)\,\pi(y_t\mid x,y_{<t})\,\mathrm d\nu(y_t), 
\end{split}
\end{equation}
where $\lambda_{x,y_{<t}}\in\mathbb R$ is the multiplier for normalization and $\mu_{x,y_{<t}}(\cdot)\ge 0$ is the multiplier for nonnegativity. Using the pointwise subgradient of the capped term,
\begin{equation}\label{t.7}
\partial_{\pi}\,\min\!\big(\pi,\,(1+\varepsilon)\pi_{\mathrm{prop}}\big)=
\begin{cases}
\{1\}, & \pi<(1+\varepsilon)\pi_{\mathrm{prop}},\\
[0,1], & \pi=(1+\varepsilon)\pi_{\mathrm{prop}},\\
\{0\}, & \pi>(1+\varepsilon)\pi_{\mathrm{prop}},
\end{cases}
\end{equation}
and $\partial_{\pi}\big[-\beta\,\pi\log(\pi/\pi_{\text{ref}})\big]=-\beta(\log(\pi/\pi_{\text{ref}})+1)$, the KKT stationarity condition states that there exists a measurable $\xi_{x,y_{<t}}(y_t)\in[0,1]$ such that for almost every $y_t$,
\begin{equation}\label{t.8}
\scalebox{0.66}{$
\begin{aligned}
\xi_{x,y_{<t}}(y_t)\cdot \mathbf 1\!\Big\{\pi(y_t\mid x,y_{<t})\le (1+\varepsilon)\pi_{\mathrm{prop}}(y_t\mid x,y_{<t})\Big\}
+ 0\cdot \mathbf 1\!\Big\{\pi(y_t\mid x,y_{<t})>(1+\varepsilon)\pi_{\mathrm{prop}}(y_t\mid x,y_{<t})\Big\}
- \beta\Big(\log \tfrac{\pi(y_t\mid x,y_{<t})}{\pi_{\text{ref}}(y_t\mid x,y_{<t})}+1\Big)
- \lambda_{x,y_{<t}} + \mu_{x,y_{<t}}(y_t)=0, 
\end{aligned}
$}
\end{equation}
together with complementary slackness $\mu_{x,y_{<t}}(y_t)\,\pi(y_t\mid x,y_{<t})=0$. Whenever $\pi(y_t\mid x,y_{<t})>0$ we have $\mu_{x,y_{<t}}(y_t)=0$. On the interior region where the cap is inactive, namely $\pi<(1+\varepsilon)\pi_{\mathrm{prop}}$, the subgradient equals $1$ and equation~\eqref{eq8} reads
\begin{equation}\label{eq8}
    1-\beta\Big(\log \tfrac{\pi(y_t\mid x,y_{<t})}{\pi_{\text{ref}}(y_t\mid x,y_{<t})}+1\Big)-\lambda_{x,y_{<t}}=0,
\end{equation}
which is equivalent to the existence of a scalar
\begin{equation}
    \tau_{x,y_{<t}}=\exp\!\Big(\tfrac{1-\lambda_{x,y_{<t}}}{\beta}-1\Big)>0
\end{equation}
such that
\begin{equation}\label{t.9}
\pi(y_t\mid x,y_{<t})=\tau_{x,y_{<t}}\,\pi_{\text{ref}}(y_t\mid x,y_{<t}) \text{almost everywhere on the inactive region.} 
\end{equation}
On the active region where $\pi=(1+\varepsilon)\pi_{\mathrm{prop}}$, KKT allows $\xi\in[0,1]$. Combining with \eqref{t.9} gives the statewise water-filling solution
\begin{equation}\label{t.10}
\begin{split}
   \quad
\pi^\star_\theta(y_t\mid x,y_{<t})&=\min\Big((1+\varepsilon)\,\pi_{\mathrm{prop}}(y_t\mid x,y_{<t}),\ \ \tau_{x,y_{<t}}\,\pi_{\text{ref}}(y_t\mid x,y_{<t})\Big)\ \ \\&\text{almost everywhere},\qquad
\int_{\mathcal Y} \pi^\star_\theta\,\mathrm d\nu=1\ .
\quad 
\end{split}
\end{equation}
Strict concavity of $\ell_{x,y_{<t}}$ in the density ratios $\log\frac{\pi}{\pi_{\text{ref}}}$ (through the KL term) ensures the uniqueness of $\tau_{x,y_{<t}}>0$. Concretely, define
\begin{equation}\label{t.11}
\Phi_{x,y_{<t}}(\tau)=\int_{\mathcal Y} \min\Big((1+\varepsilon)\pi_{\mathrm{prop}}(y_t\mid x,y_{<t}),\ \ \tau\,\pi_{\text{ref}}(y_t\mid x,y_{<t})\Big)\,\mathrm d\nu(y_t).
\end{equation}
The map $\tau\mapsto \Phi_{x,y_{<t}}(\tau)$ is continuous and strictly increasing with $\Phi_{x,y_{<t}}(0)=0$ and $\lim_{\tau\to\infty}\Phi_{x,y_{<t}}(\tau)=(1+\varepsilon)\int \pi_{\mathrm{prop}}\,\mathrm d\nu=1+\varepsilon>1$. By the intermediate value theorem there exists a unique $\tau_{x,y_{<t}}>0$ such that $\Phi_{x,y_{<t}}(\tau_{x,y_{<t}})=1$, which enforces normalization in \eqref{t.10}.

To describe the flow of probability mass, partition the token space at state $(x,y_{<t})$ into the measurable sets
\begin{equation}\label{t.12}
S_{x,y_{<t}}=\Big\{y_t:\ (1+\varepsilon)\,\pi_{\mathrm{prop}}(y_t\mid x,y_{<t})\le \tau_{x,y_{<t}}\,\pi_{\text{ref}}(y_t\mid x,y_{<t})\Big\},\quad
T_{x,y_{<t}}=\mathcal Y\setminus S_{x,y_{<t}}.
\end{equation}
Then almost everywhere $\pi^\star_\theta=(1+\varepsilon)\pi_{\mathrm{prop}}$ on $S_{x,y_{<t}}$ and $\pi^\star_\theta=\tau_{x,y_{<t}}\pi_{\text{ref}}$ on $T_{x,y_{<t}}$, and summing these two pieces yields the continuous mass-balance identity
\begin{equation}\label{t.13}
\int_{S_{x,y_{<t}}}\!\Big((1+\varepsilon)\,\pi_{\mathrm{prop}}-\pi_{\text{ref}}\Big)\,\mathrm d\nu
+\int_{T_{x,y_{<t}}}\!\Big(\tau_{x,y_{<t}}-1\Big)\pi_{\text{ref}}\,\mathrm d\nu
=0.
\end{equation}

Introduce the unobserved token-level utility $u^\star(x,y_{<t},y_t)$, assumed integrable with respect to $\pi_{\text{ref}}(\cdot\mid x,y_{<t})$. The expected performance is
\begin{equation}\label{t.14}
J(\theta)=\mathbb E_{x\sim\mathcal D}\,\mathbb E_{y\sim \pi_\theta(\cdot\mid x)}\Big[\sum_{t=1}^{|y|}u^\star(x,y_{<t},y_t)\Big]
=\mathbb E_{x,y_{<t}}\int_{\mathcal Y}\pi_\theta(y_t\mid x,y_{<t})\,u^\star(x,y_{<t},y_t)\,\mathrm d\nu(y_t).
\end{equation}
To connect the update in \eqref{t.10} to $J$, impose the continuous monotone-likelihood-ratio structure: for each $(x,y_{<t})$ define
\begin{equation}\label{t.15}
h(x,y_{<t},y_t)=\frac{\pi_{\mathrm{prop}}(y_t\mid x,y_{<t})}{\pi_{\text{ref}}(y_t\mid x,y_{<t})},
\end{equation}
and assume that $u^\star(x,y_{<t},\cdot)$ is comonotone with $h(x,y_{<t},\cdot)$ under $\pi_{\text{ref}}(\cdot\mid x,y_{<t})$, that is,
\begin{equation}\label{t.16}
\big(u^\star(x,y_{<t},y_t)-u^\star(x,y_{<t},y_t')\big)\cdot\big(h(x,y_{<t},y_t)-h(x,y_{<t},y_t')\big)\ge 0 \quad \text{for almost every }(y_t,y_t').
\end{equation}
Define the tilt
\begin{equation}\label{t.17}
w_{x,y_{<t}}(y_t)=\frac{\pi^\star_\theta(y_t\mid x,y_{<t})}{\pi_{\text{ref}}(y_t\mid x,y_{<t})}
=\min\Big(\tau_{x,y_{<t}},\ (1+\varepsilon)\,h(x,y_{<t},y_t)\Big).
\end{equation}
The map $z\mapsto \min(\tau_{x,y_{<t}},(1+\varepsilon)z)$ is nondecreasing, and by assumption $u^\star$ is nondecreasing in $h$. Therefore $w_{x,y_{<t}}(\cdot)$ and $u^\star(x,y_{<t},\cdot)$ are comonotone random variables under the base probability measure $\mathrm d\mu_{x,y_{<t}}(y_t)=\pi_{\text{ref}}(y_t\mid x,y_{<t})\,\mathrm d\nu(y_t)$. Chebyshev's association inequality in integral form gives
\begin{equation}\label{t.18}
\int w_{x,y_{<t}}(y_t)\,u^\star(x,y_{<t},y_t)\,\mathrm d\mu_{x,y_{<t}}
\ \ge\ \bigg(\int w_{x,y_{<t}}\,\mathrm d\mu_{x,y_{<t}}\bigg)\bigg(\int u^\star\,\mathrm d\mu_{x,y_{<t}}\bigg).
\end{equation}
Since $\int w_{x,y_{<t}}\,\mathrm d\mu_{x,y_{<t}}=\int \pi^\star_\theta\,\mathrm d\nu=1$, we deduce
\begin{equation}\label{t.19}
\int \pi^\star_\theta(y_t\mid x,y_{<t})\,u^\star(x,y_{<t},y_t)\,\mathrm d\nu
\ \ge\ 
\int \pi_{\text{ref}}(y_t\mid x,y_{<t})\,u^\star(x,y_{<t},y_t)\,\mathrm d\nu.
\end{equation}
Averaging over $(x,y_{<t})$ yields the global monotone improvement
\begin{equation}\label{t.20}
J(\pi^\star_\theta)\ \ge\ J(\pi_{\text{ref}}). 
\end{equation}

The same conclusion can be expressed in the language of mass transfer. The per-state change in performance is
\begin{equation}\label{t.21}
\Delta J_{x,y_{<t}}=\int_{\mathcal Y}\Big(\pi^\star_\theta(y_t\mid x,y_{<t})-\pi_{\text{ref}}(y_t\mid x,y_{<t})\Big)\,u^\star(x,y_{<t},y_t)\,\mathrm d\nu. 
\end{equation}
Using \eqref{t.10}--\eqref{t.13} this decomposes as
\begin{equation}\label{t.22}
\Delta J_{x,y_{<t}}=\int_{S_{x,y_{<t}}}\Big((1+\varepsilon)\pi_{\mathrm{prop}}-\pi_{\text{ref}}\Big)u^\star\,\mathrm d\nu
+\int_{T_{x,y_{<t}}}\Big(\tau_{x,y_{<t}}-1\Big)\pi_{\text{ref}}\,u^\star\,\mathrm d\nu.
\end{equation}
Define the nonnegative transfer magnitude
\begin{equation}\label{t.23}
M_{x,y_{<t}}=\int_{T_{x,y_{<t}}}\Big(\tau_{x,y_{<t}}-1\Big)\pi_{\text{ref}}\,\mathrm d\nu
=-\int_{S_{x,y_{<t}}}\Big((1+\varepsilon)\pi_{\mathrm{prop}}-\pi_{\text{ref}}\Big)\,\mathrm d\nu\ \ge 0,
\end{equation}
and the increment-weighted conditional means
\begin{equation}
    \bar u^{(+)}_{S_{x,y_{<t}}}=\frac{\int_{S_{x,y_{<t}}}\Big(\pi_{\text{ref}}-(1+\varepsilon)\pi_{\mathrm{prop}}\Big)\,u^\star\,\mathrm d\nu}{\int_{S_{x,y_{<t}}}\Big(\pi_{\text{ref}}-(1+\varepsilon)\pi_{\mathrm{prop}}\Big)\,\mathrm d\nu},\qquad
\bar u^{(-)}_{T_{x,y_{<t}}}=\frac{\int_{T_{x,y_{<t}}}\Big(\tau_{x,y_{<t}}-1\Big)\pi_{\text{ref}}\,u^\star\,\mathrm d\nu}{\int_{T_{x,y_{<t}}}\Big(\tau_{x,y_{<t}}-1\Big)\pi_{\text{ref}}\,\mathrm d\nu}.
\end{equation}

Then
\begin{equation}\label{t.24}
\Delta J_{x,y_{<t}}=-\,M_{x,y_{<t}}\big(\bar u^{(+)}_{S_{x,y_{<t}}}-\bar u^{(-)}_{T_{x,y_{<t}}}\big).
\end{equation}
By definition, on $S_{x,y_{<t}}$ one has $(1+\varepsilon)\pi_{\mathrm{prop}}\le \tau_{x,y_{<t}}\pi_{\text{ref}}$, which is equivalent to $h(x,y_{<t},y_t)\le \tau_{x,y_{<t}}/(1+\varepsilon)$; the reverse holds on $T_{x,y_{<t}}$. Under the comonotonicity condition \eqref{t.16} the conditional mean of $u^\star$ on $T_{x,y_{<t}}$ is at least that on $S_{x,y_{<t}}$, hence $\bar u^{(-)}_{T_{x,y_{<t}}}\ge \bar u^{(+)}_{S_{x,y_{<t}}}$, and therefore $\Delta J_{x,y_{<t}}\ge 0$. Averaging over $(x,y_{<t})$ recovers \eqref{t.20}.

It is useful to verify that the tokenwise KL penalty admits an unbiased single-sample estimator in the continuous case. Define $\psi(r)=r-\log r-1$. For any $(x,y_{<t})$,
\begin{equation}\label{t.25}
\scalebox{0.93}{$
\begin{aligned}
\int_{\mathcal Y} \psi\!\Big(\frac{\pi_{\text{ref}}(y_t\mid x,y_{<t})}{\pi_\theta(y_t\mid x,y_{<t})}\Big)\,\pi_\theta(y_t\mid x,y_{<t})\,\mathrm d\nu(y_t)=\int \Big(\tfrac{\pi_{\text{ref}}}{\pi_\theta}-\log\tfrac{\pi_{\text{ref}}}{\pi_\theta}-1\Big)\,\pi_\theta\,\mathrm d\nu
=\mathrm{KL}\big(\pi_\theta(\cdot\mid x,y_{<t})\ \|\ \pi_{\text{ref}}(\cdot\mid x,y_{<t})\big), 
\end{aligned}
$}
\end{equation}
and $\psi(r)\ge 0$ for all $r>0$ because $\psi''(r)=1/r>0$ and $\psi(1)=0$. This validates the KL term in the surrogate \eqref{t.4}. In small-step stochastic ascent, if $\theta^+=\theta+\eta \hat g$ with $\mathbb E[\hat g]=\nabla_\theta$ of the empirical surrogate and bounded second moment, standard smoothness arguments imply an expected increase in the surrogate. Translating to performance via the association argument above yields the first-order estimate
\begin{equation}\label{t.26}
\begin{aligned}
   \mathbb{E}\big[J(\theta^{+})-J(\theta)\big]\ \approx\ \eta\ \mathbb{E}_{x,y_{<t}}\Big[\,\mathrm{Cov}_{Y_t\sim \pi_{\text{ref}}(\cdot\mid x,y_{<t})}\big(\min(\tau_{x,y_{<t}},(1+\varepsilon)h(x,y_{<t},Y_t)),\ u^\star(x,y_{<t},Y_t)\big)\,\Big]\ \ge 0,
\end{aligned}
\end{equation}
because both arguments of the covariance are nondecreasing functions of the common scalar $h(x,y_{<t},Y_t)$.

In conclusion, in continuous output spaces, provided the reference and proposal policies admit densities with respect to a common base measure and satisfy the comonotone likelihood-ratio structure with the latent utility, GRPO with no reward and constant advantage, equipped with ratio clipping and a KL trust region, reduces at each token state $(x,y_{<t})$ to maximizing the overlap integral $\int \min(\pi,(1+\varepsilon)\pi_{\mathrm{prop}})$ minus a KL divergence, a strictly concave problem with the unique water-filling maximizer $\pi^\star_\theta=\min((1+\varepsilon)\pi_{\mathrm{prop}},\,\tau\,\pi_{\text{ref}})$. This update transfers probability mass from low ratio regions to high ratio regions and, by the integral association inequality together with the mass-balance identity, guarantees a monotone increase of the expected latent utility $J$ without requiring any scalar reward signal.
\end{proof}
\end{document}